\documentclass[11pt]{article}

\usepackage[final]{acl}

\usepackage{times}
\usepackage{latexsym}
\usepackage{amsmath}
\usepackage{amsthm}
\usepackage{amsfonts}
\usepackage{mathtools}
\usepackage{booktabs}
\usepackage{todonotes}
\usepackage{pgfplots}
\usepackage{makecell}
\usepackage{xcolor}
\usepackage{hyperref}
\usepackage[most]{tcolorbox}
\usepackage{standalone}
\pgfplotsset{compat=1.18}
\pgfplotsset{unbounded coords=jump} 
\usepackage{tikz}
\usepackage{amssymb}
\usetikzlibrary{arrows.meta,positioning,calc,fit,shapes,patterns,decorations.pathreplacing,patterns}
\usepackage{subcaption}

\usepackage[T1]{fontenc}

\usepackage[utf8]{inputenc}

\usepackage{microtype}

\usepackage{inconsolata}

\usepackage{graphicx}
\definecolor{judgegreen}{RGB}{0, 85, 98}

\newtcolorbox{judgebox}[1][]{
    colback=white,         
    colframe=black,   
    colbacktitle=black,
    coltitle=white,        
    boxrule=1.0pt,         
    arc=0pt,               
    outer arc=0pt,
    fonttitle=\bfseries\large,
    title={GPT-4o Judge Prompt}, 
    enhanced,               
    attach boxed title to top left={xshift=0pt, yshift=0pt}, 
    boxed title style={frame hidden}, 
    #1 
}
\newtheoremstyle{uprightthm}
  {3pt}
  {3pt}
  {\normalfont}
  {}
  {\bfseries}
  {.}
  {.5em}
  {}

\theoremstyle{uprightthm}
\newtheorem{definition}{Definition}
\newtheorem{lemma}{Lemma}

\newtheorem{proposition}{Proposition}

\usetikzlibrary{calc, positioning, patterns, decorations.pathreplacing, arrows.meta, shapes.geometric, backgrounds, fit, shadows}
\usepackage{geometry}

\definecolor{myblue}{RGB}{31, 119, 180}
\definecolor{mybluefill}{RGB}{214, 230, 244}
\definecolor{myorange}{RGB}{255, 127, 14}
\definecolor{myorangefill}{RGB}{255, 230, 209} 
\definecolor{mygreen}{RGB}{44, 160, 44}
\definecolor{myred}{RGB}{214, 39, 40}
\definecolor{mygray}{RGB}{100, 100, 100}

\title{QuantileMark: A Message-Symmetric Multi-bit Watermark for LLMs}

\author{
  \textbf{Junlin Zhu},
  \textbf{Baizhou Huang},
  \textbf{Xiaojun Wan} \\
  \textsuperscript{}Wangxuan Institute of Computer Technology, Peking University \\
  \texttt{zhujunlin@stu.pku.edu.cn, \{hbz19, wanxiaojun\}@pku.edu.cn} \\
}

\begin{document}
\maketitle
\begin{abstract}
As large language models become standard backends for content generation, practical provenance increasingly requires multi-bit watermarking. 
In provider-internal deployments, a key requirement is message symmetry: the message itself should not systematically affect either text quality or verification outcomes.
Vocabulary-partition watermarks can break message symmetry in low-entropy decoding: some messages are assigned most of the probability mass, while others are forced to use tail tokens. 
This makes embedding quality and message decoding accuracy message-dependent.
We propose QuantileMark, a white-box multi-bit watermark that embeds messages within the continuous cumulative probability interval $[0, 1)$.
At each step, QuantileMark partitions this interval into $M$ equal-mass bins and samples strictly from the bin assigned to the target symbol, ensuring a fixed $1/M$ probability budget regardless of context entropy.
For detection, the verifier reconstructs the same partition under teacher forcing, computes posteriors over latent bins, and aggregates evidence for verification.
We prove message-unbiasedness, a property ensuring that the base distribution is recovered when averaging over messages. 
This provides a theoretical foundation for generation-side symmetry, while the equal-mass design additionally promotes uniform evidence strength across messages on the detection side.
Empirical results on C4 continuation and LFQA show improved multi-bit recovery and detection robustness over strong baselines, with negligible impact on generation quality. Our code is available at \href{https://github.com/zzzjunlin/QuantileMark}{GitHub}.
\end{abstract}

\section{Introduction}
\label{sec:intro}

Large language models (LLMs) have become standard backends for applications ranging from dialogue assistance and content creation to code generation and data analysis~\cite{chatgpt}.
As high-quality synthetic text becomes ubiquitous and inexpensive, provenance has emerged as a practical necessity: platforms and providers must attribute the origin of content and its deployment settings.
This capability is critical not only for mitigating misinformation but also for policy enforcement, abuse response (e.g., toxic content), copyright disputes, and enterprise compliance~\cite{yang2025survey}. 

Generative watermarking addresses this need by embedding a covert signal during the decoding process, enabling a detector to verify the model's authorship~\cite{kirchenbauer2023watermark}. 
While early work focused on \emph{zero-bit} schemes for binary presence detection, real-world deployments increasingly demand \emph{multi-bit} provenance to encode metadata such as user IDs, model versions, or timestamps~\cite{yoo2023advancing}. 
Multi-bit attribution is inherently \textbf{provider-internal}. The entity hosting the model must also control the detection service. 
After all, even if a third party extracts an embedded identifier, they cannot map it to a real-world identity without the provider's private user database.
Consequently, the provider is the ultimate root of trust for identity resolution. 
This dynamic practically justifies white-box access to the model at detection time.

Unlike zero-bit watermarking, where the detector only answers a binary question of presence, multi-bit provenance requires decoding a specific message (e.g., a user ID).
We represent the provenance message as a binary string segmented into $m$-bit symbols, so each symbol takes one of $M=2^m$ discrete values.
At each watermarking step, we embed one symbol and can convey up to $m$ bits of information.
For example, a 24-bit message can be encoded as 12 symbols when $m=2$.

The process of encoding these messages introduces a critical new requirement: \textbf{message symmetry}. 
Message symmetry means that the message itself should not systematically affect either text quality or verification outcomes.
In particular, this concern arises in two places.
During generation, messages should receive comparable probability-mass budgets so that text quality does not depend on the identifier.
During detection, evidence should accumulate comparably across messages so that watermark detection are not message-dependent.
Ensuring this symmetry is essential for operational fairness, as it guarantees that the provider can deliver consistent service quality and attribution reliability across the entire user base.


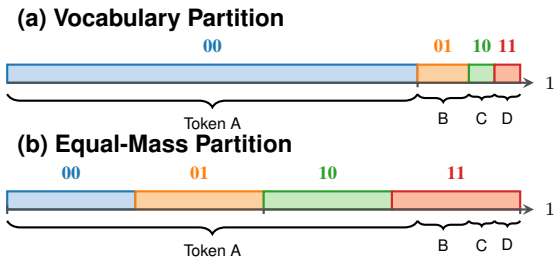
\begin{figure}[t]
  \centering
  \begin{tikzpicture}[font=\sffamily\scriptsize, xscale=0.9, yscale=0.60]
    \def\w{7.5}        
    \def\h{0.4}        
    \def\gap{2.8}      
    \def\tickH{0.1}    
    \def\braceY{-0.25} 
    \def\labelOffset{0.05} 

    \definecolor{sym00}{RGB}{31, 119, 180}
    \definecolor{sym00fill}{RGB}{198, 219, 239}
    \definecolor{sym01}{RGB}{255, 127, 14}
    \definecolor{sym01fill}{RGB}{253, 208, 162}
    \definecolor{sym10}{RGB}{44, 160, 44}
    \definecolor{sym10fill}{RGB}{199, 233, 192}
    \definecolor{sym11}{RGB}{214, 39, 40}
    \definecolor{sym11fill}{RGB}{252, 187, 161}
    
    \definecolor{mygray}{RGB}{80, 80, 80}

    \node[anchor=west, font=\sffamily\small\bfseries] at (0, \h+1.0) {(a) Vocabulary Partition};

    \draw[fill=sym00fill, draw=sym00, thick] (0,0) rectangle (0.8*\w, \h);
    
    \draw[fill=sym01fill, draw=sym01, thick] (0.8*\w,0) rectangle (0.9*\w, \h);
    
    \draw[fill=sym10fill, draw=sym10, thick] (0.9*\w,0) rectangle (0.95*\w, \h);

    \draw[fill=sym11fill, draw=sym11, thick] (0.95*\w,0) rectangle (1.0*\w, \h);

    \node[text=sym00, font=\bfseries\scriptsize, anchor=south] at (0.4*\w, \h+\labelOffset) {00};
    \node[text=sym01, font=\bfseries\scriptsize, anchor=south] at (0.85*\w, \h+\labelOffset) {01};
    \node[text=sym10, font=\bfseries\scriptsize, anchor=south] at (0.925*\w, \h+\labelOffset) {10};
    \node[text=sym11, font=\bfseries\scriptsize, anchor=south] at (0.975*\w, \h+\labelOffset) {11};

    \draw[->, thick, >=stealth, mygray] (0,0) -- (\w+0.2, 0) node[right, text=black] {$1$};
    \draw[thick, mygray] (0.8*\w, 0) -- (0.8*\w, -\tickH); 
    
    \draw[decorate, decoration={brace, amplitude=5pt, mirror}, thick, black] 
        (0, \braceY) -- (0.8*\w, \braceY) 
        node[midway, below=5pt, font=\sffamily\tiny] {Token A };
    
    \draw[decorate, decoration={brace, amplitude=3pt, mirror}, thick, black] 
        (0.8*\w, \braceY) -- (0.9*\w, \braceY) 
        node[midway, below=3pt, font=\sffamily\tiny] {B};

    \draw[decorate, decoration={brace, amplitude=2pt, mirror}, thick, black] 
        (0.9*\w, \braceY) -- (0.95*\w, \braceY);
    \node[below=3pt, font=\sffamily\tiny] at (0.925*\w, \braceY) {C};

    \draw[decorate, decoration={brace, amplitude=2pt, mirror}, thick, black] 
        (0.95*\w, \braceY) -- (1.0*\w, \braceY);
    \node[below=3pt, font=\sffamily\tiny] at (0.975*\w, \braceY) {D};

    \begin{scope}[yshift=-\gap cm]
        \node[anchor=west, font=\sffamily\small\bfseries] at (0, \h+1.0) {(b) Equal-Mass Partition};

        \draw[fill=sym00fill, draw=sym00, thick] (0,0) rectangle (0.25*\w, \h);
        
        \draw[fill=sym01fill, draw=sym01, thick] (0.25*\w,0) rectangle (0.5*\w, \h);
        
        \draw[fill=sym10fill, draw=sym10, thick] (0.5*\w,0) rectangle (0.75*\w, \h);
        
        \draw[fill=sym11fill, draw=sym11, thick] (0.75*\w,0) rectangle (1.0*\w, \h);

        \node[text=sym00, font=\bfseries\scriptsize, anchor=south] at (0.125*\w, \h+\labelOffset) {00};
        \node[text=sym01, font=\bfseries\scriptsize, anchor=south] at (0.375*\w, \h+\labelOffset) {01};
        \node[text=sym10, font=\bfseries\scriptsize, anchor=south] at (0.625*\w, \h+\labelOffset) {10};
        \node[text=sym11, font=\bfseries\scriptsize, anchor=south] at (0.875*\w, \h+\labelOffset) {11};

        \draw[->, thick, >=stealth, mygray] (0,0) -- (\w+0.2, 0) node[right, text=black] {$1$};
        \draw[thick, mygray] (0, \tickH) -- (0, -\tickH);
        \draw[thick, mygray] (0.5*\w, 0) -- (0.5*\w, -\tickH); 
        \draw[thick, mygray] (1.0*\w, 0) -- (1.0*\w, -\tickH);

        \draw[decorate, decoration={brace, amplitude=5pt, mirror}, thick, black] 
            (0, \braceY) -- (0.8*\w, \braceY) 
            node[midway, below=5pt, font=\sffamily\tiny] {Token A};
        
        \draw[decorate, decoration={brace, amplitude=3pt, mirror}, thick, black] 
            (0.8*\w, \braceY) -- (0.9*\w, \braceY) 
            node[midway, below=3pt, font=\sffamily\tiny] {B};

        \draw[decorate, decoration={brace, amplitude=2pt, mirror}, thick, black] 
            (0.9*\w, \braceY) -- (0.95*\w, \braceY);
        \node[below=3pt, font=\sffamily\tiny] at (0.925*\w, \braceY) {C};

        \draw[decorate, decoration={brace, amplitude=2pt, mirror}, thick, black] 
            (0.95*\w, \braceY) -- (1.0*\w, \braceY);
        \node[below=3pt, font=\sffamily\tiny] at (0.975*\w, \braceY) {D};

    \end{scope}

  \end{tikzpicture}
  \caption{Impact of probability mass partitioning strategies on message symmetry under a low-entropy distribution. (a) Vocabulary Partition allocates uneven probability budgets to messages that should ideally be weighted equally. (b) Equal-Mass Partition assigns a fixed probability budget to each message.}
  \label{fig:partition_comparison}
\end{figure}

Existing generative watermarks can be broadly grouped into \emph{distortionary} and \emph{distribution-preserving} designs.
Both design lines face challenges when message symmetry is required for multi-bit provenance.

Distortionary schemes typically employ a \textbf{vocabulary partition} to construct the signal~\cite{yoo2023advancing}.
For a message space of size $M$, these methods partition the vocabulary into $M$ disjoint sets and apply a logit bias to the subset corresponding to the target symbol.
This approach, however, introduces severe asymmetry under low-entropy decoding.
Consider a step where the model assigns probabilities \{0.8, 0.1, 0.05, 0.05\} to its top candidates, as shown in Figure~\ref{fig:partition_comparison}.
If the token with probability 0.8 falls into the partition set for one value, embedding that value is almost free: the model outputs its preferred token and the detector observes the signal with high probability.
Conversely, for the other $M-1$ values, their assigned sets may contain only low-probability tail tokens.
Embedding these values forces the model to override its natural choice, which degrades text quality while simultaneously yielding weaker statistical evidence.
Consequently, the expected evidence averaged over all messages is diluted.


Distribution-preserving methods ensure the watermarked output preserves the base distribution when marginalized over randomness.
This goal can already mitigate message-dependent distortion on the generation side.
However, constrained by the requirement of \textbf{black-box verification}, these schemes typically prioritize stealth through \emph{minimal stepwise distortion}~\cite{jiang2025stealthink}.
This creates a fundamental bottleneck for multi-bit provenance: the evidence is often too subtle for robust message recovery.
In provider-internal settings with white-box access, such constraints are unnecessary.
A more attractive direction is to enforce message symmetry explicitly, then exploit its benefit: allocate each symbol a fixed probability budget and permit larger, controlled deviations, ensuring effective evidence accumulation.

To overcome the structural asymmetry of vocabulary partitioning and the evidence limitations of existing distribution-preserving designs, we introduce \textbf{QuantileMark}, a white-box multi-bit watermark based on continuous probability mass partitioning.
At each step, QuantileMark partitions the cumulative probability interval $[0,1)$ into $M$ bins of equal probability mass and encodes a symbol by sampling within the corresponding bin (Figure~\ref{fig:framework}).
During detection, the verifier reconstructs the same partition to compute posteriors for the latent bins.
Equal-mass allocation enforces message symmetry by construction, and it turns symmetry into stronger and more stable stepwise evidence regardless of context entropy.

In summary, our contributions are threefold. 
First, we introduce QuantileMark, a provenance framework that guarantees a uniform probability budget for every message symbol via quantile partitioning, accompanied by a white-box detector that computes posteriors for decoding.
Second, we formalize the notion of message-unbiasedness and prove that QuantileMark satisfies this property, providing a theoretical guarantee for generation-side symmetry.
Finally, we demonstrate that QuantileMark outperforms vocabulary-based baselines on C4 and LFQA in recovery and robustness, while maintaining generation quality.

\begin{figure*}[t]
  \centering

  \begin{subfigure}[t]{\textwidth}
    \caption{Embedding Stage}
    \label{fig:framework:a}
    \centering
    \begin{tikzpicture}[
    yscale=0.95, 
    transform shape,
    font=\sffamily,
    >=Stealth,
    tokenbox/.style={draw=gray!80, thick, minimum height=0.9cm, inner sep=0pt, align=center, fill=white},
    selectedtoken/.style={draw=myorange, very thick, minimum height=0.9cm, inner sep=0pt, align=center, fill=myorangefill},
    binbox/.style={draw=myblue!50, thick, minimum height=0.9cm, inner sep=0pt, align=center, fill=white, text=gray},
    selectedbin/.style={draw=myblue, very thick, minimum height=0.9cm, inner sep=0pt, align=center, fill=mybluefill, text=myblue},
    logicbox/.style={draw=mygreen!80!black, thick, rounded corners=3pt, fill=mygreen!5, align=center, font=\sffamily\scriptsize, inner sep=4pt, drop shadow={opacity=0.15}},
    inputbox/.style={draw=gray!60, rounded corners=2pt, fill=white, font=\sffamily\small, align=center, drop shadow={opacity=0.15}, inner sep=3pt},
    stepnode/.style={circle, fill=black!70, text=white, font=\bfseries\tiny, inner sep=1.5pt},
    overlapfill/.style={pattern=north east lines, pattern color=myblue!80},
    pinstyle/.style={fill=myred, draw=white, line width=0.8pt}
]

    \begin{scope}[local bounding box=generation_stage]
        \def\geomStart{2.5}        
        \def\barWidth{9.6}        
        \def\topBarY{2.2}         
        \def\botBarY{0.0}        
        \def\boxH{0.65}            
        \def\binW{2.4}            

        
        \node[logicbox, minimum width=2.0cm, minimum height=1.0cm] 
            (randomness) at (0.2, 0.0) { 
           Symbol Index $i$\\[0pt]
        \& Permutation $\phi$
        };

        \node[logicbox, above=0.7cm of randomness, minimum width=2.8cm, align=center] 
            (selection) {
            Map Target Symbol $s_{i}$\\[0pt]
            to Bin $B_{r^\star}$
        };

        \node[inputbox, 
            above=0.35cm of selection, 
            align=center, 
            inner sep=3pt
        ] (msg) {
            \textbf{Message} $S$\\[2pt]
            $[ s_1, \textcolor{myblue}{\mathbf{s_2{=}\texttt{10}}}, s_3, s_4 ]$
        };

        \node[inputbox, left=0.4cm of randomness.170, anchor=east] (ctx) {Local Context $$h$$};
        \node[inputbox, left=0.4cm of randomness.200, anchor=east] (key) {Key};

        \node[stepnode, anchor=south east] at (randomness.north west) {1};
        \node[stepnode, anchor=south east] at (selection.north west) {2};

        \draw[->, thick, gray] (ctx.east) -- (randomness.170);
        \draw[->, thick, gray] (key.east) -- (randomness.200);
        
        \draw[->, thick, mygreen!80!black] (randomness.north) -- (selection.south);
        \draw[->, thick, myblue] (msg.south) -- (selection.north);

        \begin{scope}[shift={(\geomStart, 0)}]
            \node[font=\scriptsize\bfseries, text=gray] at (0, \botBarY-0.55) {0};
            \node[font=\scriptsize\bfseries, text=gray] at (\barWidth, \botBarY-0.55) {1};
            \draw[gray!30, thick] (0, \botBarY-0.1) -- (\barWidth, \botBarY-0.1);

            \node[anchor=south, font=\bfseries] at (\barWidth/4, \topBarY + \boxH + 0.05) {Token Probability Intervals};
            
            \filldraw[tokenbox] (0, \topBarY) rectangle ++(5.2, \boxH) node[midway, xshift=-1.0cm] {``has''};
            \fill[overlapfill] (4.8, \topBarY) rectangle (5.2, \topBarY+\boxH); 
            
            \filldraw[selectedtoken] (5.2, \topBarY) rectangle ++(2.4, \boxH) node[midway, xshift=0.3cm] {\textbf{``is''}};
            \fill[overlapfill] (5.2, \topBarY) rectangle (7.2, \topBarY+\boxH); 
            
            \filldraw[tokenbox] (7.6, \topBarY) rectangle ++(1.2, \boxH) node[midway] {``in''};
            \filldraw[tokenbox, fill=gray!10] (8.8, \topBarY) rectangle ++(0.8, \boxH) node[midway, font=\tiny] {...};

            \node[anchor=north, align=center,font=\bfseries] at (\barWidth/4, \botBarY - 0.25) {
                \textbf{Quantile Bins}
            };
            
            \draw[binbox] (0, \botBarY) rectangle ++(\binW, \boxH) node[midway] {$B_0$};
            \draw[binbox] (\binW, \botBarY) rectangle ++(\binW, \boxH) node[midway] {$B_1$};
            \filldraw[selectedbin] (2*\binW, \botBarY) rectangle ++(\binW, \boxH) node[midway, font=\bfseries] {$B_2$};
            \draw[binbox] (3*\binW, \botBarY) rectangle ++(\binW, \boxH) node[midway] {$B_3$};

            \def\bStart{4.8} 
            \def\bEnd{7.2}   
            \draw[dashed, myblue, thick] (\bStart, \botBarY+\boxH) -- (\bStart, \topBarY);
            \draw[dashed, myblue, thick] (\bEnd, \botBarY+\boxH) -- (\bEnd, \topBarY);

            \def\uPos{6.7} 
            
            \coordinate (u_point) at (\uPos, 0.325); 
            \coordinate (pin_head) at (\uPos, 0.35); 

            \draw[|-|, myred!60, thick] (4.8, -0.35) -- (7.2, -0.35);
            \node[font=\small\bfseries, text=myred!80, below] at (6.0, -0.35) {Bin-Restricted Sample};

            \draw[myred, thick] (u_point) -- (pin_head);
            \filldraw[pinstyle] (pin_head) circle (3.5pt);
            \node[right=1pt, font=\scriptsize\bfseries, text=myred] at (u_point) {$u \sim U(B_2)$};

            \draw[->, densely dashed, very thick, myred] (pin_head) -- (\uPos, \topBarY);

            \filldraw[fill=myred, draw=white, thick] (\uPos, \topBarY) circle (2.2pt);

            \node[align=left, font=\sffamily\small, text=myred, anchor=west] at (\uPos+0.35, 1.4) {
                Token covers $u$\\
                $\Rightarrow$ Output \textbf{``is''}
            };

            \node[stepnode, anchor=south east] at (4.35, -0.6) {3}; 
            \node[stepnode, anchor=south west] at (\uPos+0.8, 1.8) {4}; 

        \end{scope}

        \draw[->, very thick, myblue] (selection.east) .. controls +(0.8,0) and +(-1.5, 0.1) .. 
                (\geomStart + 2.5*\binW, \botBarY + \boxH + 0.2);
        
        \node[text=myblue, font=\scriptsize\bfseries, fill=white, inner sep=1pt] at (\geomStart + 1.2, 1.0) {Target $B_{r^\star}$};

    \end{scope} 
\end{tikzpicture}
  \end{subfigure}

  \vspace{0.0em}

  \begin{subfigure}[t]{\textwidth}
    \caption{Detection Stage}
    \label{fig:framework:b}
    \centering
    \input{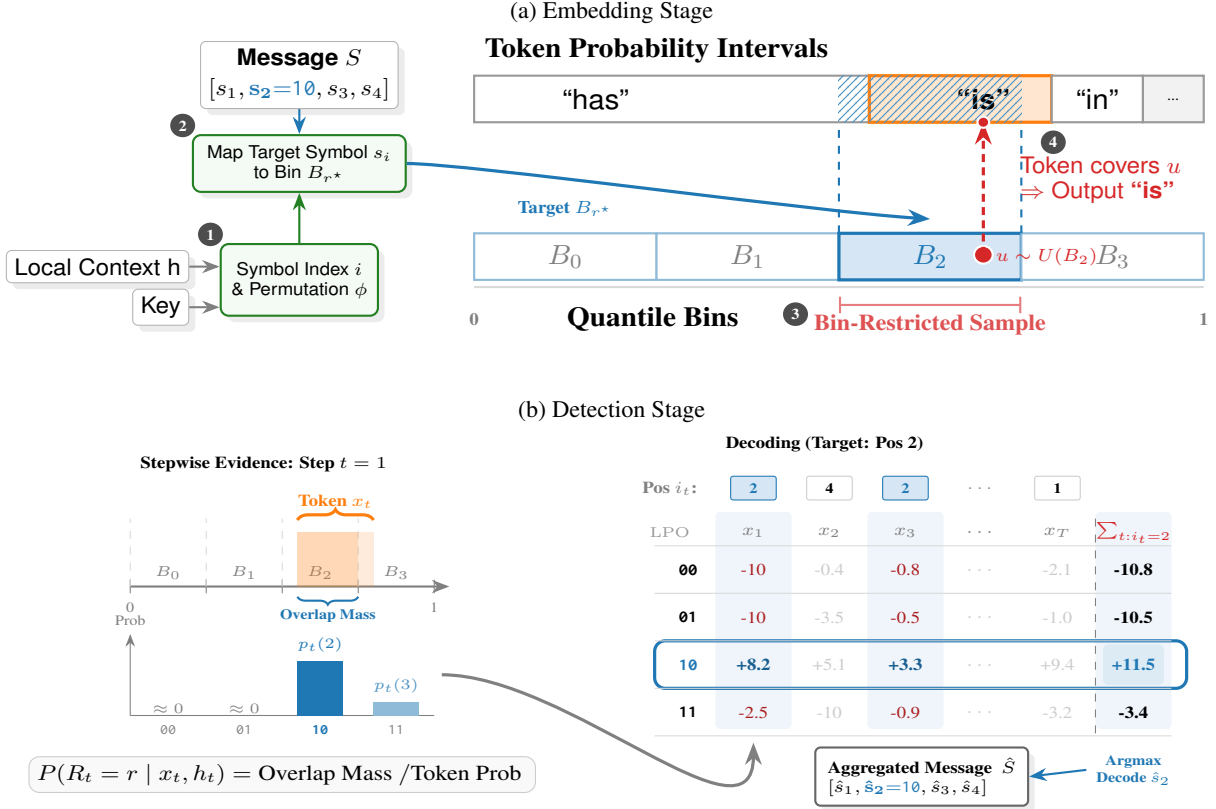}
  \end{subfigure}

  \caption{Overview of QuantileMark, where $m=2$, $H=4$. (a) \textbf{Embedding:} The key-derived logic (green) maps symbol to target bin $B_{r^\star}$ (blue frame). A random value $u$ (red dashed line) is sampled uniformly within this bin, forcing the selection of the token \textit{is} whose interval covers $u$. (b) \textbf{Detection:} Posteriors (blue bars) form the evidence $\mathrm{LPO}$ at this step. These scores are aggregated across steps assigned to position $i_t=2$ (right, blue columns) to decode $\hat{s}_2$. }
  \label{fig:framework}
\end{figure*}

\section{Preliminaries}
\label{sec:preliminaries}

We study a provider-internal watermarking setting where a model generates a sequence $x_{1:T}$ using a fixed secret key $K$.
Let $h_t \coloneqq x_{<t}$ denote the history at step $t$, and let $p_0(\cdot\mid h_t)$ denote the base next-token distribution.
Our goal is to embed a message $S = (s_1, \dots, s_H)$ consisting of $H$ discrete symbols.
Each symbol $s_h$ encodes $m$ bits of information and is drawn from the set $[M] \coloneqq \{0, \dots, M-1\}$, where $M=2^m$.

\textbf{Position Allocation.}
Unlike zero-bit presence detection, recovering a multi-bit message requires locating where each symbol is embedded.
A standard abstraction is \emph{position allocation}~\cite{yoo2023advancing}: a keyed function maps each step $t$ to an index $i_t\in\{1,\dots,H\}$.
This implies that step $t$ carries evidence \emph{only} about the specific message symbol $s_{i_t}$.

\textbf{Vocabulary Partitioning as Multi-Bit Channel.}
Many existing methods construct the channel by partitioning the vocabulary.
At step $t$, the vocabulary is split into $M$ disjoint sets $\{\mathcal{V}^{(r)}_t\}_{r=0}^{M-1}$.
To embed symbol $s_{i_t}$, the sampler promotes tokens in $\mathcal{V}^{(s_{i_t})}_t$.
In this paradigm, the abstract message symbol is tied directly to discrete token identities.


\textbf{CDF and Quantile Function View of Sampling.}
It is convenient to represent discrete sampling using the cumulative distribution function (CDF) $F$ and its inverse, the quantile function $Q(u) = F^{-1}(u)$.
Under an ordering of tokens at step $t$, each token corresponds to a specific interval on the cumulative probability interval $[0,1)$ whose length equals its probability.
Standard sampling is operationally equivalent to drawing $u\sim\mathrm{Unif}[0,1)$ and selecting the output token via $x = Q(u)$, \textbf{which returns the token whose interval contains $u$}.
This geometric view allows us to decouple the channel from discrete token identities by defining the watermark logic directly on the continuous domain of the quantile function.
We refer to embedding strategies that operate on this continuous interval as \textbf{quantile watermarking}.

\section{Methodology: Equal-Mass Quantile Watermarking}
\label{sec:methodology}

We propose \textbf{QuantileMark}, a white-box watermarking framework designed to ensure \textbf{message symmetry} by defining the embedding channel (i.e., the medium where the watermark signal is injected) on the cumulative probability interval rather than vocabulary space.

The framework consists of two procedures (Figure~\ref{fig:framework}).
During generation, QuantileMark partitions the stepwise cumulative probability interval into $M$ equal-mass quantile bins and samples a token strictly from the bin assigned to the target message symbol.
During detection, the verifier reconstructs this quantile geometry, computing the posterior probability of the latent bin to aggregate evidence and decode the message.

\subsection{Setup: Message and Position Allocation}
\label{sec:setup-pa}

Let $x_{1:T}$ be the generated sequence and $h_t$ the history at step $t$.
We embed a message $S = (s_1, \dots, s_H)$ consisting of $H$ symbols, where each symbol $s_h \in \{0, \dots, M-1\}$ and $M=2^m$.

At each step $t$, we derive pseudo-random parameters from a secret key $K$ and a local context window $g_t$ (the previous $w$ tokens). 
Specifically, we generate a position allocation index $i_t \in \{1, \dots, H\}$ that determines which message symbol $s_{i_t}$ to embed at the current step, alongside a keyed permutation $\phi_t$ that maps this symbol to a specific target bin index on the CDF.

The permutation $\phi_t$ ensures that a fixed symbol value is not systematically bound to the same quantile bin across steps, which is essential for the unbiasedness guarantees in Section~\ref{sec:unbiasedness}.






\subsection{Embedding via Quantile Partitioning}
\label{sec:embed}

The core of our embedding strategy is to enforce a uniform probability budget for every symbol by operating on the continuous domain of the quantile function.
This requires partitioning the unit interval and calculating the mass overlap to bridge the continuous design with the discrete vocabulary.

\textbf{Quantile Geometry and Overlap Mass.}
At step $t$, we sort the vocabulary by probability to map each token $v$ to a sub-interval $I_t(v) \subset [0,1)$ of length $p_0(v|h_t)$.
We simultaneously partition $[0,1)$ into $M$ fixed bins $B_r = [r/M, (r+1)/M)$.
The alignment between token intervals and bins is captured by the \textbf{overlap mass}:
\begin{equation}
    \mu_t(v, r) = \left| I_t(v) \cap B_r \right|.
\end{equation}

This explicitly quantifies the mass of $v$ falling within $B_r$, bridging the continuous design with the discrete vocabulary and enabling the use of partial mass from tokens that straddle bin boundaries.

\textbf{Bin-Restricted Sampling.}
To embed the target symbol $s_{i_t}$, we identify the target bin $r^\star_t = \phi_t(s_{i_t})$.
Instead of standard sampling (drawing $u \in [0,1)$ globally), QuantileMark samples a random value $u_t$ strictly within the target bin $B_{r^\star_t}$.
The output token $x_t$ is determined by finding the token whose interval $I_t(x_t)$ contains $u_t$.
The resulting watermarked distribution for a chosen bin $r$ is:
\begin{equation}
p_{\mathrm{wm}}(v\mid h_t,r) = M \cdot \mu_t(v,r).
\label{eq:wm-dist}
\end{equation}

This sampling mechanism guarantees that every target symbol receives an identical probability budget $1/M$ regardless of context entropy.
Consequently, this structurally enforces message symmetry while effectively exploiting it: in contrast to minimal-distortion baselines like StealthInk, we concentrate the entire probability mass into the target bin, substantially boosting the stepwise evidence available for detection.
\subsection{Decoding and Detection}
\label{sec:detect}

The detection process follows a \emph{decode-then-test} paradigm: we first reconstruct the quantile geometry to recover the most likely message $\hat{S}$, and then compute a detection score based on the confidence of this recovery.

\textbf{Stepwise Evidence Extraction.}
The core of our detection strategy is to recover soft evidence about the embedding channel using teacher forcing.
At each step $t$, the detector reconstructs the base distribution $p_0(\cdot \mid h_t)$ and the quantile geometry.
The observed token $x_t$ occupies the interval $I_t(x_t)$.
Given that the sampling value must lie within this interval, the posterior probability that it originated from bin $r$ is the proportion of the token's mass overlapping that bin:
\begin{equation}
    p_t(r) = P(R_t = r \mid x_t, h_t) = \frac{\mu_t(x_t, r)}{p_0(x_t \mid h_t)}.
    \label{eq:bin-posterior}
\end{equation}

Crucially, this posterior provides soft evidence.
Unlike vocabulary partitioning, which forces a hard assignment (casting a vote for a single symbol and rejecting the other $M-1$), $p_t(r)$ distributes support proportional to the overlap mass.
For example, a token straddling bins $r_1$ and $r_2$ yields positive support for both while decisively ruling out non-overlapping bins.

To aggregate this evidence numerically, we compute the Log Posterior Odds (LPO):
\begin{equation}
    \mathrm{LPO}_t(r) = \log \left( \frac{p_t(r)}{1 - p_t(r)} \right).
\end{equation}
For numerical stability, we clip $p_t(r)$ to the range $[\epsilon, 1-\epsilon]$.

\textbf{Message Decoding.}
Since the position allocation $i_t$ is deterministic given the key, we aggregate evidence for each message symbol $s_h$ independently.
The decoder identifies the symbol $\hat{s}_h$ that maximizes the cumulative LPO across all steps assigned to index $h$:
\begin{equation}
\hat{s}_h = \operatorname*{arg\,max}_{s \in \{0, \dots, M-1\}} \sum_{t : i_t = h} \mathrm{LPO}_t\big(\phi_t(s)\big).
    \label{eq:decoding}
\end{equation}
Here, $\phi_t(s)$ maps the candidate symbol $s$ to its corresponding bin index at step $t$.

\textbf{Sequence-Level Scoring.}
Once the message $\hat{S} = (\hat{s}_1, \dots, \hat{s}_H)$ is decoded, we evaluate the overall presence of the watermark.
We define the implied target bin path as $\hat{r}_t = \phi_t(\hat{s}_{i_t})$ and compute the average evidence along this path: $T(x_{1:T}) = \frac{1}{T} \sum_{t=1}^{T} \mathrm{LPO}_t(\hat{r}_t)$

The watermark is detected if $T(x_{1:T}) > \theta$, where $\theta$ is a threshold calibrated to control the false positive rate on non-watermarked text.
\subsection{Unbiasedness Properties of Equal-Mass Channelization}
\label{sec:unbiasedness}
We assume a uniform prior over the message space. Under this assumption, the target symbol $s$ at any given step is uniformly distributed over $[M]$.
\begin{definition}[Message-unbiasedness at a step]
\label{def:msg-unbiased-step}
Fix a context $h_t$ and key $K$.
Let $s$ be a target symbol drawn uniformly from $[M]$, and let $R=\phi_t(s)$ be the relabeled bin index.
The scheme is message-unbiased if, for all tokens $v$,
\[
\mathbb{E}_{s}\big[p_{\mathrm{wm}}(v\mid h_t,R)\big]=p_0(v\mid h_t).
\]
\end{definition}

\begin{lemma}[Equal-mass bins imply message-unbiasedness]
\label{lem:msg-unbiased}
Assume $\phi_t$ maps a uniform symbol to a uniform bin in $[M]$,
then QuantileMark is message-unbiased at step $t$.
\end{lemma}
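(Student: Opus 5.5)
The plan is a direct computation. It combines the closed form of the watermarked distribution in \eqref{eq:wm-dist} with the fact that the bins $B_0,\dots,B_{M-1}$ partition $[0,1)$.

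First, I fix the context $h_t$ and the key $K$. This makes the token intervals $I_t(v)$, the overlap masses $\mu_t(v,r)$, and the permutation $\phi_t$ all deterministic. The only randomness left is the target symbol $s$.

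By hypothesis, $R=\phi_t(s)$ is uniform on $[M]$ whenever $s$ is. So I can rewrite the expectation over $s$ as an average over bins:
\[
\mathbb{E}_{s}\big[p_{\mathrm{wm}}(v\mid h_t,R)\big]=\sum_{r=0}^{M-1}\frac{1}{M}\,p_{\mathrm{wm}}(v\mid h_t,r).
\]
When $\phi_t$ is a bijection this step is immediate, since the sum over $s$ is a reindexing of the sum over $r$.

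Next, I substitute \eqref{eq:wm-dist}. The factor $M$ in $p_{\mathrm{wm}}$ cancels the weight $1/M$, leaving $\sum_{r}\mu_t(v,r)=\sum_r |I_t(v)\cap B_r|$. Because the bins $B_r=[r/M,(r+1)/M)$ are disjoint and their union is $[0,1)\supseteq I_t(v)$, countable additivity of Lebesgue measure gives $\sum_r |I_t(v)\cap B_r| = |I_t(v)| = p_0(v\mid h_t)$. That is exactly the condition in Definition~\ref{def:msg-unbiased-step}.

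The main point needing care is justifying \eqref{eq:wm-dist} itself, if it is not simply taken as given. Sampling $u_t$ uniformly on $B_r$ has density $M$ on that bin. Token $v$ is emitted exactly when $u_t \in I_t(v)$, so the probability is $M\,|I_t(v)\cap B_r|$. This equality needs $|B_r| = 1/M$, which is the equal-mass property. With unequal bin widths the weights would not cancel and unbiasedness would fail, which is why the lemma is phrased in terms of equal-mass bins.

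Everything else is bookkeeping. Neither tie-breaking in the sort order nor half-open endpoints affects the measures involved.
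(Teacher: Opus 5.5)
Your proposal is correct and follows the paper's proof essentially line for line. Uniformity of $R=\phi_t(s)$ turns $\mathbb{E}_s$ into a $1/M$-weighted average over bins, the factor $M$ in \eqref{eq:wm-dist} cancels, and the partition identity $\sum_r \mu_t(v,r)=p_0(v\mid h_t)$ finishes the argument; your derivation of \eqref{eq:wm-dist} from uniform sampling on $B_r$ with density $M$ is a sound addition that the paper simply takes as given.
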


The proof follows from the linearity of expectation and the partition of unity property of the quantile bins; see Appendix~\ref{sec:app_unbiased} for details.

Message-unbiasedness ensures that, on average, the watermark introduces no distortion if the message is unknown.
Crucially, this property serves as a structural pathway to satisfying the dual notion of \textbf{cipher-unbiasedness} ~\cite{jiang2025stealthink}, which requires the distribution to be preserved when the message is fixed but the key is randomized.

\begin{definition}[Cipher-unbiasedness at a step]
\label{def:cipher-unbiased-step}
Fix a context $h_t$, a key $K$, and a symbol $s\in[M]$.
Let $Z$ be a seed random variable, and let $\Phi_{t,Z}$ denote the resulting key-conditioned permutation on $[M]$ produced from $Z$.
The scheme is cipher-unbiased if, for all tokens $v$,
\[
\mathbb{E}_{Z}\!\left[p_{\mathrm{wm}}\!\left(v\mid h_t,\Phi_{t,Z}(s)\right)\right]
=
p_0(v\mid h_t).
\]
\end{definition}

QuantileMark satisfies Definition~\ref{def:cipher-unbiased-step} whenever, for any fixed $s$, the induced bin index $\Phi_{t,Z}(s)$ is uniform on $[M]$ under the seed distribution.
The proof is given in Appendix~\ref{sec:app_geometry} (Lemma~\ref{lem:cipher-unbiased}).

\begin{table*}[t]
\small
\centering
\setlength{\tabcolsep}{2.8pt}
\renewcommand{\arraystretch}{1.1}
\begin{tabular}{lcccccccccc}
\toprule
& \multicolumn{5}{c}{C4} &
  \multicolumn{5}{c}{LFQA} \\
\cmidrule(lr){2-6} \cmidrule(lr){7-11}
Method &
\makecell{Bit Acc $\uparrow$} &
\makecell{AUC  $\uparrow$} &
\makecell{TPR@ \\ 1\%FPR $\uparrow$} &
\makecell{PPL $\downarrow$} &
\makecell{Time (s) $\downarrow$} &
\makecell{Bit Acc $\uparrow$} &
\makecell{AUC  $\uparrow$} &
\makecell{TPR@ \\ 1\%FPR $\uparrow$} &
\makecell{PPL  $\downarrow$} &
\makecell{Time (s) $\downarrow$} \\
\midrule
No watermark        & --    & -- & -- & 7.684  & -- & --    & -- & -- & 2.647 & -- \\
MPAC                & 0.9702 & 0.9887 & 0.9800 & 10.351 & \textbf{0.103} & 0.8770 & 0.9756 & 0.8056 & 3.793 & \textbf{0.111} \\
StealthInk          & 0.9003 & 0.9869 & 0.7920  & 8.235  & 0.645 & 0.7447 & 0.8301 & 0.2425 & \textbf{2.636} & 1.013 \\
\textbf{QuantileMark} (ours) & \textbf{0.9893} & \textbf{0.9995} & \textbf{0.9840} & \textbf{7.404} & 0.343 & \textbf{0.9500} & \textbf{0.9997} & \textbf{1.0} & 2.759 & 0.348 \\
\bottomrule
\end{tabular}
\caption{Generation and detection performance on C4 and LFQA with 24 bits embedded in 300 tokens. Time (s) denotes average detection time per sample.}
\label{tab:detection-main}
\end{table*}
\section{Experiments}
\label{sec:experiments}

\begin{figure*}[t]
  \centering
  \definecolor{myblue}{RGB}{31, 119, 180}
  \definecolor{myorange}{RGB}{255, 127, 14}
  \definecolor{mygreen}{RGB}{44, 160, 44}
  
  \pgfplotsset{
    myplotstyle/.style={
      width=\linewidth,
      height=4.25cm,
      grid=both,
      grid style={line width=.1pt, draw=gray!20},
      major grid style={line width=.2pt,draw=gray!40},
      xlabel={Tokens $T$},
      xmin=50, xmax=450,
      xtick={50,100,150,200,250,300,350,400,450},
      legend style={
        font=\scriptsize,
        at={(0.98,0.02)},      
        anchor=south east,     
        cells={anchor=west},
        draw=gray!50,
        rounded corners=1pt,
        fill=white, fill opacity=0.65,
        inner sep=0.25pt,
        row sep=-1pt,
      },
      tick label style={font=\footnotesize},
      label style={font=\small},
      title style={font=\small, yshift=-1ex, align=center},
      cycle list name=color list,
      every axis plot/.append style={very thick, mark size=2.5pt}
    }
  }

  \begin{minipage}{0.48\textwidth}
    \centering
    \begin{tikzpicture}
      \begin{axis}[
        myplotstyle,
        ylabel={Bit accuracy},
        ymin=0.55, ymax=1.0,
        title={(a) Message Recovery},
      ]

      \addplot[color=myblue, mark=*, solid] coordinates {
        (50,0.5953798199287889)
        (100,0.7295918373429046)
        (150,0.8299319738028)
        (200,0.8929988667875731)
        (250,0.9281854365059133)
        (300,0.9506541514966268)
        (350,0.9678612046274309)
        (400,0.9758248005710771)
        (450,0.98421501645456)
      };
      \addlegendentry{QuantileMark}

      \addplot[color=myorange, mark=triangle*, densely dashdotted] coordinates {
        (50,0.6309931523179355)
        (100,0.7051940635867315)
        (150,0.758847031691303)
        (200,0.8016895755254936)
        (250,0.843642608509031)
        (300,0.873997708161672)
        (350,0.8966954013396954)
        (400,0.9160919530638333)
        (450,0.9295976996421814)
      };
      \addlegendentry{MPAC}

      \addplot[color=mygreen, mark=square*, dashed] coordinates {
        (50,0.5851934545096897)
        (100,0.6269900517677194)
        (150,0.668662674412756)
        (200,0.7014720563938518)
        (250,0.7209331315613079)
        (300,0.7427644709983986)
        (350,0.767215568773047)
        (400,0.7809381240499234)
        (450,0.7904116466461893)
      };
      \addlegendentry{StealthInk}

      \end{axis}
    \end{tikzpicture}
  \end{minipage}
  \hfill
  \begin{minipage}{0.48\textwidth}
    \centering
    \begin{tikzpicture}
      \begin{axis}[
        myplotstyle,
        ylabel={Detection AUC},
        ymin=0.5, ymax=1.02,
        title={(b) Presence Detection },
        legend style={at={(0.98,0.02)}, anchor=south east}
      ]

      \addplot[color=myblue, mark=*, solid] coordinates {
        (50,0.7893354620759868)
        (100,0.962561895506502)
        (150,0.9953028830579851)
        (200,0.9992132907584803)
        (250,0.9997437360947711)
        (300,0.9998718680473855)
        (350,0.9998951647660427)
        (400,0.9999184614846999)
        (450,0.999941758203357)
      };
      \addlegendentry{QuantileMark}

      \addplot[color=myorange, mark=triangle*, densely dashdotted] coordinates {
        (50,0.7500996903734285)
        (100,0.8470749671608182)
        (150,0.9094518202289359)
        (200,0.9396086489295119)
        (250,0.9630613715001004)
        (300,0.9795172470802186)
        (350,0.9835255648038048)
        (400,0.9856420927467301)
        (450,0.9903686087990486)
      };
      \addlegendentry{MPAC}

      \addplot[color=mygreen, mark=square*, dashed] coordinates {
        (50,0.5583368764172336)
        (100,0.6120338605480062)
        (150,0.6862203736240094)
        (200,0.7353840223744127)
        (250,0.7979176377783356)
        (300,0.8313268672236365)
        (350,0.8524328588332318)
        (400,0.8697066943956399)
        (450,0.879767564232835)
      };
      \addlegendentry{StealthInk}

      \end{axis}
    \end{tikzpicture}
  \end{minipage}

  \caption{Varying detection length $T$ on Llama-3.1-8B-Instruct for LFQA.}
  \label{fig:lfqa_token_sweep}
  \vspace{-2mm}
\end{figure*}
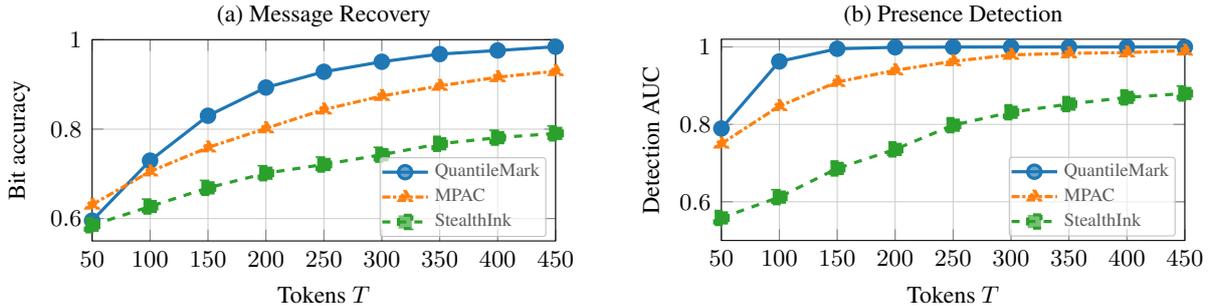
We evaluate QuantileMark in a provider-internal setting, focusing on accurate \emph{multi-bit} recovery from clean generations and robustness against realistic deployment mismatches.
Our results confirm that QuantileMark significantly improves detection stability over vocabulary-based baselines while maintaining generation quality.

\subsection{Experimental Setup}
\label{sec:exp-setup}

We conduct experiments on two tasks: open-ended continuation on C4~\cite{raffel2020exploring} using Llama-2-7B~\cite{touvron2023llama}, and long-form QA on ELI5/LFQA~\cite{eli5_lfqa} using Llama-3.1-8B-Instruct~\cite{grattafiori2024llama3herdmodels}.
We pair each task with a suitable model, using a base model for continuation and an instruction-tuned model for QA, to ensure the evaluation reflects realistic deployment behavior.
Unless otherwise stated, results are averaged over 500 watermarked and 500 non-watermarked samples per task, all fixed to a length of $T=300$ tokens.
We use human references as non-watermarked text to simulate realistic detection against organic text.
Appendix~\ref{sec:app_model_negatives} additionally evaluates against non-watermarked model generations to isolate the watermark signal from distributional mismatch.

We embed a 24-bit message using top-$k{=}128$ sampling and temperature $\tau{=}1.0$.
We compare QuantileMark ($m{=}2$) against MPAC~\cite{yoo2023advancing} ($m{=}2, \gamma{=}0.25, \delta{=}2.0$) and StealthInk~\cite{jiang2025stealthink} ($m{=}1$), following the recommended settings for best trade-offs.
Detection of QuantileMark reconstructs geometry via teacher forcing on output tokens (adding chat template headers for Instruct models) with matching top-$k/\tau$; detector mismatch is analyzed separately in Section~\ref{subsec:lfqa-mismatch}.
We report Bit Acc (bit-wise accuracy, averaged over 500 watermarked samples), detection AUC, TPR@1\%FPR, and perplexity (PPL).

\begin{table*}[t]
\centering
\small
\setlength{\tabcolsep}{4pt}
\renewcommand{\arraystretch}{1.08}
\begin{tabular}{l cccccccccc}
\toprule
& \multicolumn{2}{c}{No attack} &
  \multicolumn{2}{c}{Copy-paste ($\epsilon=0.2$)} &
  \multicolumn{2}{c}{Synonym ($\epsilon=0.2$)} &
  \multicolumn{2}{c}{Deletion ($\epsilon=0.1$)} &
  \multicolumn{2}{c}{Paraphrase (Dipper)} \\
\cmidrule(lr){2-3} \cmidrule(lr){4-5} \cmidrule(lr){6-7} \cmidrule(lr){8-9} \cmidrule(lr){10-11}
Method &
Bit Acc $\uparrow$ & AUC $\uparrow$ &
Bit Acc $\uparrow$ & AUC $\uparrow$ &
Bit Acc $\uparrow$ & AUC $\uparrow$ &
Bit Acc $\uparrow$ & AUC $\uparrow$ &
Bit Acc $\uparrow$ & AUC $\uparrow$ \\
\midrule
MPAC          & 0.9702 & 0.9887 & 0.9438 & 0.9849 & 0.9476 & 0.9876 & \textbf{0.8811} & \textbf{0.9750} & 0.7201 & \textbf{0.7936} \\
StealthInk    & 0.9003 & 0.9869 & 0.8497 & 0.9550 & 0.8608 & 0.9701 & 0.7839 & 0.8854 & 0.6829 & 0.6536 \\
\textbf{QuantileMark} & \textbf{0.9893} & \textbf{0.9995} & \textbf{0.9730} & \textbf{0.9972} & \textbf{0.9712} & \textbf{0.9963} & 0.8712 & 0.9426 & \textbf{0.7640} & 0.7609 \\
\bottomrule
\end{tabular}
\vspace{-1mm}
\caption{
Robustness on C4 with 24 bits embedded in 300 tokens.
The attack intensities $\epsilon$ denote the fraction of tokens modified.
}
\label{tab:robustness}
\vspace{-2mm}
\end{table*}
\subsection{Generation and Detection Results}
\label{subsec:detection}

Table~\ref{tab:detection-main} summarizes the generation and detection performance of different methods on two tasks.
QuantileMark achieves consistently strong multi-bit recovery and near-saturated detection, with high bit accuracy and AUC and TPR@1\%FPR close to 1.0 on both C4 and LFQA.
Notably, these gains persist on LFQA, where the next-token distribution is often more peaked and vocabulary-partition methods can allocate highly uneven probability mass across symbol values, destabilizing detection.

In terms of generation quality, QuantileMark maintains PPL close to the unwatermarked baseline; minor fluctuations (e.g., 7.404 vs 7.684 on C4) fall within expected finite-sample randomness, consistent with our theoretical message-unbiasedness.
In contrast, MPAC incurs a noticeable PPL increase due to distortionary green-list bias, while StealthInk shows weaker recovery at the same message length.
This performance comes with negligible operational overhead: although detection requires a model forward pass, the computational cost is strictly bounded by the generation latency itself, making verification inherently affordable for any provider capable of hosting the service.

Varying the generated length $T$ (Figure~\ref{fig:lfqa_token_sweep}) confirms that QuantileMark accumulates evidence efficiently, reaching high recovery and detection rates with fewer tokens than baselines.

We next vary the symbol size $m$ while keeping the message length fixed at 24 bits.
Figure~\ref{fig:ablation_m_final} shows a trade-off between symbol resolution and reliable evidence.
Moderate choices ($m=2,3$) perform best, while $m=4$ sharply reduces recovery even though detection remains strong.
Intuitively, since a single observed token reveals limited information about the latent bin (Appendix~\ref{sec:app_dilution}), increasing $m$ under a fixed-length budget fails to provide sufficient supporting tokens for $M$-ary decisions. 
This effectively dilutes the information per symbol, explaining the drop in bit accuracy observed at $m=4$.

The lower AUC at $m=1$ stems from the sharp CDF geometry of human text. Human tokens often fall in the tail, yielding nearly deterministic bin posteriors. With binary partitions ($M=2$), the detector's maximization step can easily find a spurious message that fortuitously aligns with these strong signals. Increasing to $M=4$ makes such chance agreement statistically much harder—as each token supports a specific bin while rejecting three others—thereby improving separation.

\begin{figure}[t]
\centering
\begin{tikzpicture}

    \definecolor{c1}{RGB}{222, 235, 247} 
    \definecolor{c2}{RGB}{158, 202, 225}
    \definecolor{c3}{RGB}{66, 146, 198}
    \definecolor{c4}{RGB}{8, 69, 148}

    \begin{axis}[
        ybar,
        width=\linewidth,
        height=4.6cm,
        symbolic x coords={Bit Accuracy, Detection AUC},
        xtick=data,
        bar width=14pt, 
        enlarge x limits=0.45, 
        ymin=0, 
        ymax=1.03, 
        ylabel={Score},
        ytick={0,  0.25, 0.5, 0.75, 1.0},
        axis x line*=bottom,
        axis y line*=left,
        grid=major,
        grid style={dotted, gray!30},
        legend style={
            at={(0.5, 1.1)}, 
            anchor=south,
            legend columns=-1,
            draw=none, 
            fill=none,
            font=\small
        },
        tick label style={font=\small},
        label style={font=\small},
    ]
        \addplot[style={fill=c1, draw=c4!40, thin}] coordinates {
            (Bit Accuracy, 0.90915) (Detection AUC, 0.71464)
        };
        \addlegendentry{$m=1$}
        \addplot[style={fill=c2, draw=c4!40, thin}] coordinates {
            (Bit Accuracy, 0.94998) (Detection AUC, 0.99974)
        };
        \addlegendentry{$m=2$}
        \addplot[style={fill=c3, draw=c4!40, thin}] coordinates {
            (Bit Accuracy, 0.957) (Detection AUC, 0.99885)
        };
        \addlegendentry{$m=3$}
        \addplot[style={fill=c4, draw=none}] coordinates {
            (Bit Accuracy, 0.53941) (Detection AUC, 0.99799)
        };
        \addlegendentry{$m=4$}

    \end{axis}
\end{tikzpicture}
\caption{LFQA ablation of $m$ for QuantileMark with 24 bits embedded. We vary the bits per symbol $m$ ( $M=2^m$ quantile bins). }
\label{fig:ablation_m_final}
\end{figure}
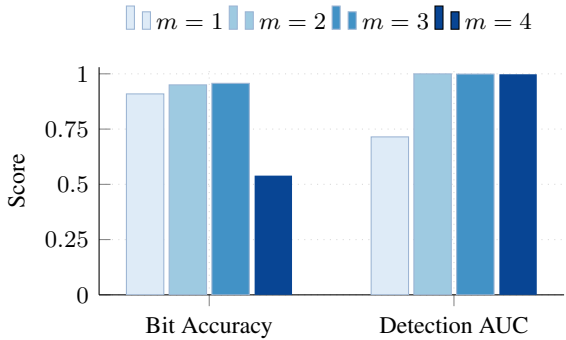
\subsection{Robustness to Scrubbing Attacks on C4}
\label{subsec:robustness}

We evaluate robustness on C4 under four scrubbing attacks commonly used in prior multi-bit watermark evaluations~\cite{jiang2025stealthink}:
copy-paste mixing with non-watermarked text, synonym substitution, random deletion, and paraphrasing.
Table~\ref{tab:robustness} reports bit accuracy and detection AUC; full attack specifications are described in Appendix~\ref{sec:app_attacks}.

While the watermark resists local lexical edits (copy-paste and substitution), deletions and paraphrasing prove significantly more damaging. These attacks induce synchronization drift and degrade the context quality required to reconstruct the quantile geometry, thereby hindering the recovery of long multi-bit messages.
Robust detection under paraphrasing remains an open challenge across all existing multi-bit watermarking schemes.

\subsection{Detector-Generator Mismatch on LFQA}
\label{subsec:lfqa-mismatch}

We evaluate detector-generator mismatch when text is generated with $(k{=}128,\tau{=}1.0)$ but detector varies $(\hat{k},\hat{\tau})$.
Figure~\ref{fig:lfqa_mismatch_combined} shows that performance degrades smoothly under moderate mismatch:
bit accuracy remains high across $\hat{\tau}\in[0.8,1.2]$ and remains stable even for small detector top-$\hat{k}$ (e.g., $\hat{k}=8$).
This robustness is expected in our white-box setting because the detector reconstructs quantile geometry from logits:
moderate changes in truncation or temperature often preserve the probability ordering on the head of the distribution, so overlap masses and channel posteriors shift gradually rather than catastrophically.

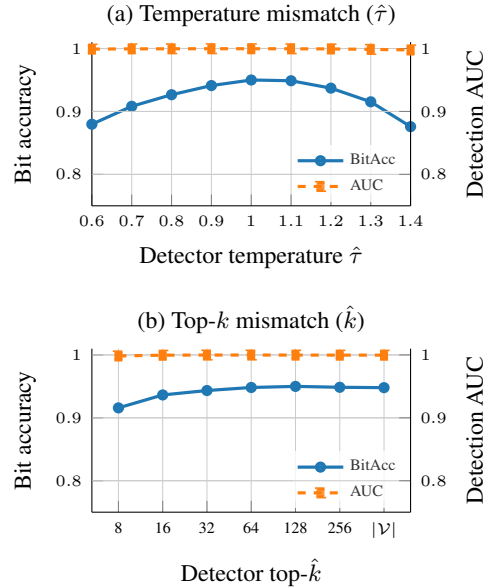
\begin{figure}[t]
  \centering
  \definecolor{tempblue}{RGB}{31,119,180}
  \definecolor{temporange}{RGB}{255,127,14}

  \pgfplotsset{
    commonstyle/.style={
      width=0.75\linewidth,
      height=3.75cm,
      grid=both,
      grid style={line width=.1pt, draw=gray!20},
      major grid style={line width=.2pt, draw=gray!40},
      tick label style={font=\tiny},
      label style={font=\footnotesize},
      legend style={
        font=\tiny,
        at={(0.98,0.02)}, 
        anchor=south east,
        cells={anchor=west},
        draw=none,        
        fill=white, fill opacity=0.8, 
        inner sep=1pt,
      },
      every axis plot/.append style={very thick, mark size=1.5pt},
    }
  }

  \begin{subfigure}[b]{\linewidth}
    \centering
    \begin{tikzpicture}
      \begin{axis}[
        commonstyle,
        xlabel={Detector temperature $\hat{\tau}$},
        xmin=0.6, xmax=1.4,
        xtick={0.6,0.7,0.8,0.9,1.0,1.1,1.2,1.3,1.4},
        ylabel={Bit accuracy},
        ymin=0.75, ymax=1.01,
        axis y line*=left,
        axis x line*=bottom,
        title={(a) Temperature mismatch ($\hat{\tau}$)},
        title style={font=\footnotesize, yshift=-0.8ex},
      ]
        \addplot[color=tempblue, mark=*] coordinates {
          (0.6,0.87968) (0.7,0.90815) (0.8,0.9266) (0.9,0.94113) (1.0,0.94998)
          (1.1,0.94881) (1.2,0.93704) (1.3,0.91533) (1.4,0.87584)
        };
        \addlegendentry{BitAcc}

        \addlegendimage{color=temporange, mark=square*, dashed, very thick, mark size=1.5pt}
        \addlegendentry{AUC}
      \end{axis}

      \begin{axis}[
        commonstyle,
        ylabel={Detection AUC},
        ymin=0.75, ymax=1.01,
        axis y line*=right,
        axis x line=none,
        yticklabel pos=right,
        xmin=0.6, xmax=1.4,
      ]

        \addplot[color=temporange, mark=square*, dashed] coordinates {
          (0.6,0.99912) (0.7,0.99951) (0.8,0.99961) (0.9,0.99971) (1.0,0.99973)
          (1.1,0.9996) (1.2,0.99942) (1.3,0.99869) (1.4,0.99782)
        };
      \end{axis}
    \end{tikzpicture}
    \label{fig:sub_temp}
  \end{subfigure}
  
  \vspace{0.2cm}

  \begin{subfigure}[b]{\linewidth}
    \centering
    \begin{tikzpicture}
      \begin{axis}[
        commonstyle,
        symbolic x coords={8,16,32,64,128,256,Full},
        xtick=data,
        xticklabels={8,16,32,64,128,256,$|\mathcal{V}|$}, 
        xlabel={Detector top-$\hat{k}$},
        ylabel={Bit accuracy},
        ymin=0.75, ymax=1.01,
        axis y line*=left,
        axis x line*=bottom,
        title={(b) Top-$k$ mismatch ($\hat{k}$)},
        title style={font=\footnotesize, yshift=-0.8ex},
      ]

        \addplot[color=tempblue, mark=*] coordinates {
          (8,0.91583) (16,0.93637) (32,0.94339) (64,0.94823) (128,0.94998) (256,0.94856) (Full,0.94798)
        };
        \addlegendentry{BitAcc}

        \addlegendimage{color=temporange, mark=square*, dashed, very thick, mark size=1.5pt}
        \addlegendentry{AUC}
      \end{axis}

      \begin{axis}[
        commonstyle,
        symbolic x coords={8,16,32,64,128,256,Full},
        ylabel={Detection AUC},
        ymin=0.75, ymax=1.01,
        axis y line*=right,
        axis x line=none,
        yticklabel pos=right,
      ]

        \addplot[color=temporange, mark=square*, dashed] coordinates {
          (8,0.99838) (16,0.99955) (32,0.99981) (64,0.99983) (128,0.99973) (256,0.99967) (Full,0.99965)
        };
      \end{axis}
    \end{tikzpicture}
    \label{fig:sub_topk}
  \end{subfigure}

  \caption{Detector-generator mismatch analysis on Llama-3.1-8B-Instruct.}
  \label{fig:lfqa_mismatch_combined}
  \vspace{-3mm}
\end{figure}
\begin{table}[t]
\centering
\small
\setlength{\tabcolsep}{1pt} 
\renewcommand{\arraystretch}{1.05}
\begin{tabular}{@{}lccc|ccc@{}} 
\toprule
& \multicolumn{3}{c}{Machine translation} &
\multicolumn{3}{c}{Summarization} \\
\cmidrule(lr){2-4}\cmidrule(lr){5-7}
Method
& BLEU $\uparrow$ & R-1 $\uparrow$ & BERT $\uparrow$
& R-1 $\uparrow$ & BERT $\uparrow$ & PPL $\downarrow$ \\
\midrule
No watermark
& 17.55 & 49.13 & 35.82
& 37.42 & 21.40 & 5.18 \\
\midrule
MPAC
& 16.89 & 47.98 & 34.89
& 36.81 & 20.57 & 5.81 \\
StealthInk
& \textbf{17.42} & 48.97 & 35.75
& 37.08 & 20.83 & 5.72 \\
QuantileMark
& 17.41 & \textbf{48.99} & \textbf{35.99}
& \textbf{37.44} & \textbf{21.37} & \textbf{5.21} \\
\bottomrule
\end{tabular}
\vspace{-1mm}
\caption{Downstream generation quality under top-$k{=}128$ sampling. For machine translation, we report BLEU, ROUGE-1 (R-1), and BERTScore (BERT). For summarization, we report ROUGE-1, BERTScore, and perplexity (PPL).}
\label{tab:quality}
\vspace{-2mm}
\end{table}
\subsection{Utility on Downstream Tasks}
\label{subsec:quality}

Finally, we assess whether watermarking degrades downstream utility under top-$k{=}128$ sampling.
For text summarization, we use BART-large~\cite{lewis2020bart} evaluated on CNN/DailyMail~\cite{hermann2015teaching}.
For machine translation, we run Multilingual BART (mBART)~\cite{liu2020multilingual} on the WMT'14 En-Ro corpus~\cite{bojar-EtAl:2014:W14-33}.
We also include a reference-free LFQA evaluation using GPT-4o, with the full experimental setup and results detailed in Appendix~\ref{sec:app_gpt4o_judge}.

Table~\ref{tab:quality} shows that QuantileMark consistently outperforms other watermarking baselines, achieving utility scores that are nearly identical to the no-watermark upper bound across both tasks.

\section{Related Work}
\label{sec:related}

We focus on generative watermarking schemes that modify the sampling process, distinguishing between distortionary partitioning and distribution-preserving methods.

\subsection{Distortionary Vocabulary Partitioning Multi-bit Watermarks}

A dominant line of work constructs a channel by partitioning the vocabulary and biasing the next token distribution toward a key dependent subset.
The watermark of \citet{kirchenbauer2023watermark} introduces green list promotion with statistical tests on set hits, followed by analyses of calibration and robustness under edits~\cite{piet2023mark,kirchenbauer2023reliability}.
Multi bit extensions retain the same mechanism while adding structure across positions, for example via position allocation and multiple colorings in MPAC~\cite{yoo2023advancing}, or via coding designs over partition assignments~\cite{wang2023towards,fernandez2023three}.
Several works improve robustness by adding error correcting codes from the view of message encoding~\cite{chao2025rbc,qu2024provably}.

However, standard partitioning schemes inherently suffer from probability mass imbalance, particularly in peaked distributions.
While recent heuristics like rank-based partitioning~\cite{park2025watermod} or entropy-based gating~\cite{gu2025invisibleentropy} mitigate this issue, they remain adaptive approximations.
In contrast, QuantileMark addresses this structurally by redefining channels in CDF interval, guaranteeing an equal probability budget by construction rather than adaptive rejection.

\subsection{Distribution-preserving and Unbiased Watermarks}

A parallel line of work aims to preserve the base model distribution in expectation over the watermark randomness.
Two complementary perspectives predominate in the literature: \emph{RNG-space} constructions, which define a measure-preserving transformation on the sampling randomness~\cite{kuditipudi2023robust}, and \emph{reweighting} constructions, which formulate watermarking as a randomized modification of the stepwise token distribution~\cite{hu2023unbiased}.

\textbf{RNG-space distortion-free watermarks.}
\citet{kuditipudi2023robust} propose distortion-free watermarking by mapping a key-derived random number sequence to samples from the language model, and instantiate the framework with inverse transform sampling and exponential minimum sampling. SynthID-Text introduces Tournament sampling~\citep{deepmind2024synthid}.
Multi-bit extensions encode information by transforming the sampling randomness based on the message: DISC employs cyclic shifts~\cite{boroujeny2024multi}, while MirrorMark utilizes measure-preserving mirroring~\citep{mirrormark2025}.

\textbf{Expectation-unbiased reweighting.}
\citet{hu2023unbiased} formalize unbiased watermarking as randomized reweighting of the stepwise distribution.
For autoregressive generation, preserving the sequence distribution requires appropriate independence of the watermark codes across steps~\cite{hu2023unbiased}.
DiPmark~\citep{wu2023dipmark} and StealthInk~\citep{jiang2025stealthink} further explore distribution-preserving designs, with StealthInk emphasizing text-only detection and multi-bit provenance.

QuantileMark shares the design of defining watermarks through structured sampling randomness but leverages white-box access to optimize for detection reliability rather than black-box stealth.
Unlike prior schemes that rely on randomized vocabulary partitions, QuantileMark enforces equal-mass channelization on the cumulative probability interval.
This structural design substantially boosts the stepwise statistical evidence available for decoding, all while maintaining the theoretical unbiasedness of the generation process.

\section{Conclusion}
\label{sec:conclusion}

We presented \textbf{QuantileMark}, a white-box multi-bit watermark designed to mitigate the challenges of provenance embedding in low-entropy regimes.
Instead of relying on vocabulary partitioning, QuantileMark defines the channel in cumulative probability interval by dividing it into $M$ equal-mass bins.
This structural approach ensures that every symbol receives a fixed probability budget, providing a basis for \emph{message symmetry} where messages are treated with consistent statistical weight.
Complementing this embedding, we derived a teacher-forced detector that exploits the reconstructed quantile geometry to compute posteriors over latent bins, enabling a coherent decode-then-test process.
Empirical results across base and instruction-tuned models demonstrate the feasibility and effectiveness of this design, showing improved recovery rates and detection performance compared to vocabulary-based baselines, while preserving generation quality.
QuantileMark thus offers a practically viable solution for provider-internal attribution that balances message symmetry with operational utility.

\section*{Limitations}

Our work has several limitations that suggest directions for future research.
First, robustness remains limited under heavy paraphrasing and other edits that substantially rewrite local contexts, since such changes can disrupt position allocation and the seed stream used for per-step relabeling and evidence aggregation. Improving paraphrase resilience may require stronger position allocation strategies that tolerate sequence misalignment and a PRF or seeding logic that depends less on fragile local token neighborhoods while remaining reproducible for the key holder.
Second, our study focused on a provider-internal setting and assumed white-box access at detection time, which rules out public verification from text alone in the current form. Although detection may transfer to settings with a proxy model, a distilled model, or a closely related model family sharing the same tokenizer, we did not systematically study how model mismatch and distribution shift affect the reconstructed geometry and downstream recovery.
Finally, our experimental evaluation was limited in scope, utilizing a small set of tasks and model families. Extending evaluation across broader decoding policies settings would better characterize the generality of quantile-based channelization for multi-bit provenance.

\section*{Ethical Considerations}
\label{sec:ethic}
The main ethical goal of multi-bit watermarking is accountability. It allows providers to trace malicious outputs, such as disinformation, back to the responsible user.
While this naturally raises privacy concerns about user surveillance, the risk is limited because the watermark is passive. It only matters if a user shares the generated text publicly; if the text remains private, the watermark is harmless.
Furthermore, our provider-internal design protects user privacy better than black-box schemes. The embedded ID is meaningless to third parties. Only the model provider can extract the signal and link it to a real user. Ultimately, while providers have this tracking ability, they must use it strictly for safety and compliance, guided by strong privacy policies, rather than for invasive profiling.

\section*{Acknowledgements}
This work was supported by Beijing Natural Science Foundation (L253001), Key Laboratory of Science, Technology and Standard in Press Industry (Key Laboratory of Intelligent Press Media Technology) and National Engineering Research Center of New Electronic Publishing Technologies. We appreciate the anonymous reviewers for their helpful comments. Xiaojun Wan is the contact author.

\bibliography{custom}

\appendix

\section{Channel Geometry and Message-Unbiasedness}
\label{sec:app_geometry}

This appendix collects auxiliary results that complement
Section~\ref{sec:embed}--\ref{sec:unbiasedness}.
We reuse the stepwise channel geometry from the main text but switch to a context-level notation by dropping the step index.
Fix a context $h$ and its decoding distribution $p_0(\cdot\mid h)$.
Define the discrete CDF $F_h$, token CDF intervals $I_h(y)$, equal-width quantile bins $\{B_r\}_{r\in[M]}$, and overlap mass
\[
\mu_h(y,r)\coloneqq |I_h(y)\cap B_r|
\]
exactly as in Section~\ref{sec:embed}.

\subsection{Basic Overlap Identities}
\label{sec:app_overlap}

\begin{lemma}[Basic overlap identities]
\label{lem:overlap-identities}
For every context $h$:
\begin{align}
\sum_{r=0}^{M-1} \mu_h(y,r)
&= |I_h(y)| = p_0(y\mid h),
\qquad \forall y\in\mathcal{V}, \label{eq:mu-sum-r}\\
\sum_{y\in\mathcal{V}} \mu_h(y,r)
&= |B_r| = \frac{1}{M},
\qquad \forall r\in[M]. \label{eq:mu-sum-y}
\end{align}
\end{lemma}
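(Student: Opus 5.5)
Both identities follow from the fact that the two families of intervals $\{I_h(y)\}_{y\in\mathcal{V}}$ and $\{B_r\}_{r\in[M]}$ are each partitions of $[0,1)$ into pairwise disjoint half-open intervals, combined with finite (countable) additivity of Lebesgue length $|\cdot|$. The plan is first to make the partition property explicit, and then to derive each identity by intersecting one partition with a single cell of the other.

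\textbf{Step 1: the token intervals partition $[0,1)$.} Under the fixed ordering $y_1,y_2,\dots$ of $\mathcal{V}$ used to define $F_h$, set
\[
I_h(y_j)=\bigl[F_h(y_{j-1}),F_h(y_j)\bigr),
\]
with $F_h(y_0)=0$. Since $F_h$ is nondecreasing, these intervals are consecutive and pairwise disjoint. Since $F_h$ reaches $\sum_y p_0(y\mid h)=1$, their union is $[0,1)$. Tokens with zero probability contribute empty intervals, which is harmless. Each interval also has length $|I_h(y)|=p_0(y\mid h)$.

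\textbf{Step 2: the bins partition $[0,1)$.} The bins $B_r=[r/M,(r+1)/M)$ for $r\in[M]$ are disjoint, their union is $[0,1)$, and each has length $1/M$.

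\textbf{Step 3: first identity, \eqref{eq:mu-sum-r}.} Fix $y$. The sets $I_h(y)\cap B_r$, $r\in[M]$, are disjoint, and their union is
\[
I_h(y)\cap\bigcup_r B_r=I_h(y).
\]
Finite additivity of length then gives $\sum_r\mu_h(y,r)=|I_h(y)|=p_0(y\mid h)$.

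\textbf{Step 4: second identity, \eqref{eq:mu-sum-y}.} Fix $r$. The sets $I_h(y)\cap B_r$, $y\in\mathcal{V}$, are disjoint, and their union is $B_r$. Additivity gives $\sum_y\mu_h(y,r)=|B_r|=1/M$.

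No real obstacle is expected. The only care needed is bookkeeping in Step 1: using half-open intervals so that endpoints are not double counted, and noting that the top-$k$ or temperature-adjusted $p_0$ still sums to $1$, so the token intervals exactly exhaust $[0,1)$.
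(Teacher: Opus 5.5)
Your proposal is correct and follows the same route as the paper: the first identity comes from $\{B_r\}_r$ partitioning $[0,1)$, and the second from $\{I_h(y)\}_y$ partitioning $[0,1)$, each combined with additivity of length. The paper's proof is just those two lines; you additionally spell out the half-open-interval bookkeeping and the normalization of the (possibly truncated) $p_0$, which the paper leaves implicit.
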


\begin{proof}
Equation~\eqref{eq:mu-sum-r} holds because $\{B_r\}_r$ partitions $[0,1)$.
Equation~\eqref{eq:mu-sum-y} holds because $\{I_h(y)\}_y$ partitions $[0,1)$.
\renewcommand{\qedsymbol}{}
\end{proof}

\subsection{Unbiasedness Notions: Message and Cipher}
\label{sec:app_unbiased}

We recall two notions introduced in the main text.
Message-unbiasedness is defined in Definition~\ref{def:msg-unbiased-step},
and cipher-unbiasedness (a StealthInk-style viewpoint) is defined in
Definition~\ref{def:cipher-unbiased-step}.
In this appendix we use a context-level notation by dropping the step index:
fix a context $h$ and the corresponding base distribution $p_0(\cdot\mid h)$.
The overlap mass $\mu_h(y,r)$ is defined exactly as in
Section~\ref{sec:embed}, and the per-channel distribution satisfies
\begin{equation}
p_{\mathrm{wm}}(y\mid h,r)=M\,\mu_h(y,r).
\label{eq:app_wm_dist}
\end{equation}

We now give the proof of Lemma~\ref{lem:msg-unbiased}, which is stated in the main text.

\begin{proof}[Proof of Lemma~\ref{lem:msg-unbiased}]
Fix $h$.
By assumption, $\phi$ maps a uniform symbol to a uniform bin, so
$R=\phi(s)$ is uniform on $[M]$ when $s$ is uniform on $[M]$.
Therefore, for any token $y\in\mathcal{V}$,
\begin{align}
\mathbb{E}_{s}\!\left[p_{\mathrm{wm}}(y\mid h,R)\right]
&=\mathbb{E}_{R}\!\left[p_{\mathrm{wm}}(y\mid h,R)\right] \notag\\
&=\frac{1}{M}\sum_{r=0}^{M-1} p_{\mathrm{wm}}(y\mid h,r) \notag\\
&=\frac{1}{M}\sum_{r=0}^{M-1} M\,\mu_h(y,r) \notag\\
&=\sum_{r=0}^{M-1}\mu_h(y,r) \notag\\
&=p_0(y\mid h),
\end{align}
where the last equality uses~\eqref{eq:mu-sum-r}.
\renewcommand{\qedsymbol}{}
\end{proof}

\begin{lemma}[Uniform cipher-induced relabeling implies cipher-unbiasedness]
\label{lem:cipher-unbiased}
Fix $(h,s)$.
Assume that under the seed prior in Definition~\ref{def:cipher-unbiased-step},
the key-derived permutation $\Phi$ induces
$R=\Phi(s)$ that is uniform on $[M]$.
Then for every token $y\in\mathcal{V}$,
\begin{equation}
\mathbb{E}\!\left[
p_{\mathrm{wm}}\!\left(y\mid h,\Phi(s)\right)
\right]
=
p_0(y\mid h).
\end{equation}
\end{lemma}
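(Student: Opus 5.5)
The plan is to mirror the proof of Lemma~\ref{lem:msg-unbiased}, with the randomness now coming from the seed $Z$ instead of the symbol $s$. Fix $(h,s)$ and a token $y\in\mathcal{V}$. The quantity $p_{\mathrm{wm}}(y\mid h,\Phi(s))$ depends on $Z$ only through the bin index $R=\Phi_{h,Z}(s)$. The context $h$, and hence $p_0(\cdot\mid h)$, the CDF intervals $I_h(\cdot)$ and the overlap masses $\mu_h(\cdot,\cdot)$, are all held fixed. We therefore rewrite the expectation over $Z$ as an expectation over the pushforward law of $R$, which by hypothesis is uniform on $[M]$.

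The main step is the chain
\begin{align*}
\mathbb{E}_Z\!\left[p_{\mathrm{wm}}(y\mid h,\Phi(s))\right]
&=\sum_{r=0}^{M-1}\Pr_Z[\Phi(s)=r]\,p_{\mathrm{wm}}(y\mid h,r)\\
&=\frac{1}{M}\sum_{r=0}^{M-1} M\,\mu_h(y,r)
=\sum_{r=0}^{M-1}\mu_h(y,r)=p_0(y\mid h).
\end{align*}
The steps are as follows. The first equality is the law of total expectation, partitioning on the event $\{\Phi(s)=r\}$. The second uses the uniformity assumption together with the per-channel formula \eqref{eq:app_wm_dist}. The last equality is identity \eqref{eq:mu-sum-r} from Lemma~\ref{lem:overlap-identities}, which holds because the bins $\{B_r\}$ partition $[0,1)$.

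The only point needing care is that the geometry must not depend on $Z$. The token ordering and the CDF intervals are determined by $p_0(\cdot\mid h)$ alone, and the seed enters only through the relabeling $\Phi$. This is what lets $\mu_h(y,r)$ come out of the expectation as a constant for each $r$.

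Conceptually, Lemma~\ref{lem:msg-unbiased} and this result are the same calculation: both reduce to averaging $M\mu_h(y,R)$ over a uniform $R$. The two lemmas differ only in the source of the uniformity, namely the message prior in one case and the seed prior in the other.
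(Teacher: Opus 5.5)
Your proof is correct and is essentially the paper's own argument. You push the expectation over the seed forward to the uniform law of $R=\Phi(s)$, substitute $p_{\mathrm{wm}}(y\mid h,r)=M\,\mu_h(y,r)$, and collapse $\sum_r \mu_h(y,r)$ to $p_0(y\mid h)$ via \eqref{eq:mu-sum-r}; your explicit remark that the CDF geometry does not depend on $Z$ only spells out what the paper leaves implicit.
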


\begin{proof}
Under the assumption, $R$ is uniform on $[M]$ even with $s$ fixed. Hence
\begin{align}
\mathbb{E}\!\left[p_{\mathrm{wm}}\!\left(y\mid h,\Phi(s)\right)\right]
&= \mathbb{E}_{R}\!\left[p_{\mathrm{wm}}(y\mid h,R)\right] \notag\\
&= \frac{1}{M}\sum_{r=0}^{M-1} p_{\mathrm{wm}}(y\mid h,r) \notag\\
&= \frac{1}{M}\sum_{r=0}^{M-1} M\,\mu_h(y,r) \notag\\
&= \sum_{r=0}^{M-1}\mu_h(y,r)
= p_0(y\mid h),
\end{align}
where the last step uses~\eqref{eq:mu-sum-r}.
\renewcommand{\qedsymbol}{}
\end{proof}
\subsection{Signal Dilution and Information Capacity}
\label{sec:app_dilution}

This subsection analyzes a fundamental decoding limitation: the information recoverable about the bin index is capped by the token surprisal $-\log_2 p_0(y\mid h)$, regardless of watermark capacity $m$.

\paragraph{Posterior Bound.}
Fix a context $h$ and observed token $y$.
Assuming a uniform prior $\mathbb{P}(R=r)=1/M$, the posterior for bin $r$ is given by Eq.~\eqref{eq:channel-posterior}: $\mathbb{P}(R=r \mid y,h) = \mu_h(y,r)/p_0(y\mid h)$.
Since the overlap mass satisfies $\mu_h(y,r) \le |B_r| = 1/M$, the posterior is uniformly bounded:
\begin{equation}
\label{eq:posterior-ub}
\max_{r\in[M]} \mathbb{P}(R=r\mid y,h) \;\le\; \min\!\left\{1, \frac{1}{M p_0(y\mid h)}\right\}.
\end{equation}
Intuitively, if a token's probability mass $p_0$ spans many bins ($M p_0 \gg 1$), the probability is diluted across all contained bins, preventing confident identification of any single bin.

\paragraph{Information Cap.}
The information gain $\Delta(y) \coloneqq H(R) - H(R\mid y,h)$ is bounded by the min-entropy. Using \eqref{eq:posterior-ub}:
\begin{equation}
\label{eq:info-cap}
\Delta(y) \le \min\!\left\{\log_2 M,\ -\log_2 p_0(y\mid h)\right\}.
\end{equation}
This bound implies that high-probability tokens cannot carry more bits of watermark information than their own self-information.
For instance, if $p_0(y\mid h) \ge 0.5$, observing $y$ yields at most 1 bit of information about the bin location, regardless of how large $M$ is.
Reliable multi-bit recovery therefore relies on aggregating partial evidence across the sequence rather than identifying the bin from a single step.
\subsection{Time-Averaged Bin Occupancy under a Fixed Message}
\label{sec:app_bin_occupancy}

This subsection models the empirical occupancy of quantile bins along a long continuation.
Fix a message $s_{1:H}\in[M]^H$ and consider a length-$T$ generated continuation $x_{1:T}$.
At step $t$, the embedding rule selects a target bin index
\[
r_t^\star \coloneqq \phi_t(s_{i_t}) \in [M],
\]
where $(i_t,\phi_t)$ are derived from the key $K$ and the step seed (e.g., $z_t=\mathrm{Hash}(g_t)$ as in the main text).
Define the empirical bin occupancy
\begin{equation}
\hat{\pi}_T(r)\coloneqq \frac{1}{T}\sum_{t=1}^T \mathbf{1}\{r_t^\star=r\},
\qquad r\in[M].
\label{eq:pi-hat}
\end{equation}
When $\hat{\pi}_T$ is close to uniform, the scheme exhibits a channel-hopping effect over time:
although each step samples within a single bin, the visited bins cover $[M]$ nearly evenly at the sequence level.

We formalize this effect under an idealized PRF model, conditioning on the realized sequence of step seeds.
Let $\mathcal{Z}_T$ be the set of distinct seeds appearing among $\{z_t\}_{t=1}^T$.
For each $z\in\mathcal{Z}_T$, define its multiplicity
\begin{align}
n_z &\coloneqq \big|\{t\in[T] : z_t=z\}\big|,
\qquad
\sum_{z\in\mathcal{Z}_T} n_z = T.
\label{eq:nz-def}
\end{align}
Because PRF outputs are functions of $z$, repeated seeds reuse the same $(i_t,\phi_t)$.
Thus, for each distinct seed $z$, define the induced bin label
\[
R_z \coloneqq \phi_{z}(s_{i_z}) \in [M],
\]
where $(i_z,\phi_z)$ denotes the PRF-derived pair associated with seed $z$.
Abbreviate
\begin{equation}
S_T \coloneqq \sum_{z\in\mathcal{Z}_T} n_z^2,
\qquad
T_{\mathrm{eff}} \coloneqq \frac{T^2}{S_T}.
\label{eq:teff}
\end{equation}
Here $T_{\mathrm{eff}}$ acts as an effective sample size: repeated seeds inflate $S_T$ and reduce concentration.

\begin{proposition}[Key-random occupancy is near-uniform, with an effective sample size]
\label{prop:bin-occupancy}
Assume an idealized PRF model in which, conditional on the realized seed sequence $(z_1,\dots,z_T)$,
the pairs $\{(i_z,\phi_z)\}_{z\in\mathcal{Z}_T}$ are independent across distinct seeds,
and each $\phi_z$ is a uniform random permutation on $[M]$ independent of $i_z$.
Then for any fixed message $s_{1:H}$ and any $r\in[M]$,
\begin{equation}
\mathbb{E}\!\left[\hat{\pi}_T(r)\mid (z_1,\dots,z_T)\right] = \frac{1}{M}.
\label{eq:pi-unbiased-key}
\end{equation}
Moreover, for any $\epsilon>0$, let $\mathbf z\coloneqq (z_1,\dots,z_T)$.

\begin{equation}
\begin{aligned}
\mathbb{P}\!\left(
\left|\hat{\pi}_T(r)-\frac{1}{M}\right|\ge \epsilon
\,\middle|\, \mathbf z
\right)
&\le 2\exp\!\left(-\frac{2\epsilon^2 T^2}{S_T}\right)\\
&=2\exp\!\left(-2\epsilon^2 T_{\mathrm{eff}}\right).
\end{aligned}
\label{eq:pi-hoeffding}
\end{equation}

\end{proposition}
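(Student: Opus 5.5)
The plan is to rewrite $\hat{\pi}_T(r)$ as a weighted sum over distinct seeds and then apply Hoeffding's inequality to independent bounded summands. Because repeated seeds reuse the same pair $(i_t,\phi_t)$, we have $r_t^\star = R_{z_t}$, so grouping the steps by seed gives
\[
\hat{\pi}_T(r)=\frac{1}{T}\sum_{z\in\mathcal{Z}_T} n_z\,\mathbf{1}\{R_z=r\}.
\]
Conditional on $\mathbf z$, the multiplicities $n_z$ are deterministic. The only randomness lies in the pairs $(i_z,\phi_z)$, which are independent across distinct seeds by the idealized PRF assumption. As a result, the indicators $Y_z\coloneqq \mathbf{1}\{R_z=r\}$ are independent across $z\in\mathcal{Z}_T$.

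For the mean identity \eqref{eq:pi-unbiased-key}, I first show $\mathbb{P}(R_z=r\mid \mathbf z)=1/M$. Condition additionally on $i_z$. The symbol $s_{i_z}$ is then a fixed element of $[M]$, since the message is fixed. Because $\phi_z$ is a uniform random permutation independent of $i_z$, the image $\phi_z(s_{i_z})$ of any fixed point is uniform on $[M]$. Averaging over $i_z$ keeps this value at $1/M$. This is the same mechanism as in Lemma~\ref{lem:cipher-unbiased}, where a uniform relabeling of a fixed symbol yields a uniform bin. Linearity of expectation then gives $\mathbb{E}[\hat{\pi}_T(r)\mid\mathbf z]=\frac{1}{T}\sum_z n_z\cdot\frac1M=\frac1M$, using $\sum_z n_z=T$ from \eqref{eq:nz-def}.

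For the tail bound \eqref{eq:pi-hoeffding}, I apply Hoeffding's inequality conditionally on $\mathbf z$ to the sum $\sum_z X_z$ with $X_z\coloneqq n_zY_z/T$. These are independent, and each takes values in $[0,n_z/T]$, so its range has width $c_z=n_z/T$. Hoeffding then gives
\[
\mathbb{P}\big(|\textstyle\sum_z X_z-\mathbb{E}\sum_z X_z|\ge\epsilon\mid\mathbf z\big)\le 2\exp\Big(-2\epsilon^2\big/\textstyle\sum_z c_z^2\Big).
\]
Here $\sum_z c_z^2=S_T/T^2=1/T_{\mathrm{eff}}$, which yields both forms of the bound at once.

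The one point that needs care is justifying independence of the $Y_z$ conditional on $\mathbf z$. Repeated seeds induce exact dependence across time steps, so Hoeffding cannot be applied naively over $t$. That is why the sum must be regrouped by distinct seed before invoking it. Once the sum is organized by seed, every step is routine.
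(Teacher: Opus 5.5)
Your proposal is correct and follows the paper's proof essentially step for step: you regroup $\hat\pi_T(r)$ as a weighted sum over distinct seeds, get the mean from the uniform image of a fixed symbol under $\phi_z$, and apply weighted Hoeffding with $\sum_z (n_z/T)^2 = S_T/T^2$. Your range $[0,n_z/T]$, of width $n_z/T$, is exactly what justifies the constant $2\epsilon^2$. This is slightly tidier than the paper's stated interval $[-w_z,w_z]$, which taken literally would only give an exponent four times weaker.
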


\begin{proof}
Condition on $(z_1,\dots,z_T)$.
Grouping repeated seeds rewrites~\eqref{eq:pi-hat} as a weighted sum over distinct seeds:
\begin{equation}
\hat{\pi}_T(r)
=
\sum_{z\in\mathcal{Z}_T} \frac{n_z}{T}\,\mathbf{1}\{R_z=r\}.
\label{eq:pi-weighted}
\end{equation}
Under the permutation assumption, for any fixed symbol $d\in[M]$ the image $\phi_z(d)$ is uniform on $[M]$,
so $R_z=\phi_z(s_{i_z})$ is uniform on $[M]$ and
$\mathbb{E}[\mathbf{1}\{R_z=r\}\mid (z_1,\dots,z_T)]=1/M$, proving~\eqref{eq:pi-unbiased-key}.

For concentration, let $X_z\coloneqq \mathbf{1}\{R_z=r\}\in\{0,1\}$ and weights $w_z\coloneqq n_z/T$.
The variables $\{X_z\}_{z\in\mathcal{Z}_T}$ are independent and $w_z(X_z-\mathbb{E}X_z)\in[-w_z,w_z]$.
Applying the standard weighted Hoeffding inequality yields
\begin{equation}
\begin{alignedat}{1}
&\mathbb{P}\!\left(
\left|\sum_{z\in\mathcal{Z}_T} w_z\!\left(X_z-\frac{1}{M}\right)\right|
\ge \epsilon \,\middle|\, \mathbf z
\right) \\
&\le 2\exp\!\left(
-\frac{2\epsilon^2}{\sum_{z\in\mathcal{Z}_T} w_z^2}
\right).
\end{alignedat}
\end{equation}

Substituting $\sum_z w_z^2=\sum_z (n_z/T)^2=S_T/T^2$ gives~\eqref{eq:pi-hoeffding}.
\renewcommand{\qedsymbol}{}
\end{proof}
This proposition highlights a purely \emph{key-driven} channel-hopping effect. Even with a fixed message $s_{1:H}$, the target mapping $r_t^\star$ visits bins nearly uniformly over time. This uniform coverage is critical for preventing \emph{structural bias}, ensuring that a deterministic message does not continuously target a small subset of bins, which would otherwise induce a perceptible distributional shift.

Crucially, this uniformity ($\hat{\pi}_T(r)\approx 1/M$) describes the distribution of the \emph{target labels}, not necessarily the token statistics of unwatermarked text. The PRF construction achieves a dual goal: it generates a target distribution that is statistically near-uniform (preventing the systematic overuse of any single bin) while remaining fully deterministic and reproducible for the key holder (enabling decoding). Finally, the effective sample size $T_{\mathrm{eff}}$ accounts for the redundancy caused by seed collisions: repeated seeds force the reuse of PRF outputs, thereby reducing the number of independent samples and slowing the convergence to uniformity.

\section{Detection as a Composite GLRT with LPO Evidence}
\label{sec:app_glrt}

This appendix formalizes the detection process as a Generalized Likelihood Ratio Test (GLRT) and justifies the use of Log Posterior Odds (LPO) as the stepwise evidence metric.

\subsection{Hypotheses and the Composite Alternative}
Let $x_{1:T}$ be an observed continuation and $h_t=x_{<t}$.
We test
\begin{align}
H_0 &: x_{1:T}\sim p_0, \\
H_1 &: x_{1:T}\sim p_{\mathrm{wm}}(\cdot\mid s)
\text{ for some message } s\in\mathcal{S}.
\end{align}
We assume the detector has white-box access to the model and can reconstruct $p_0(\cdot\mid h_t)$ via teacher forcing.
All PRF-derived assignments are reproducible because they are derived per step from the secret key and a hash of a local
context $g_t=x_{t-w:t-1}$ in the generated stream.

\subsection{Sequence Likelihood under a Fixed Message}
At each step, a message symbol determines a target channel index $r_t^\star\in[M]$ (after relabeling),
and the per-step watermarked distribution is $p_{\mathrm{wm}}(x_t\mid h_t,s)=M\,\mu_{h_t}(x_t,r_t^\star)$.
Using the standard autoregressive factorization, we write
\begin{equation}
\label{eq:seq-like-factor}
\begin{aligned}
p_{\mathrm{wm}}(x_{1:T}\mid s)
&= \prod_{t=1}^T p_{\mathrm{wm}}(x_t\mid h_t,s) \\
&=\; \prod_{t=1}^T M\,\mu_{h_t}(x_t,r_t^\star),\\
p_0(x_{1:T})
&= \prod_{t=1}^T p_0(x_t\mid h_t).
\end{aligned}
\end{equation}

\subsection{Decode-then-Test Equals a GLRT}
The generalized likelihood ratio statistic for $H_0$ versus the composite alternative $H_1$ is
\begin{equation}
\Lambda_{\mathrm{GLRT}}(x_{1:T})
\coloneqq
\frac{\max_{s\in\mathcal{S}} p_{\mathrm{wm}}(x_{1:T}\mid s)}{p_0(x_{1:T})}.
\end{equation}
Let $\hat s\in\arg\max_{s\in\mathcal{S}}\log p_{\mathrm{wm}}(x_{1:T}\mid s)$ be the decoded message.
Then the decode--then--test score
\begin{equation}
T_{\mathrm{GLRT}}(x_{1:T})
\coloneqq
\log p_{\mathrm{wm}}(x_{1:T}\mid \hat s) - \log p_0(x_{1:T})
\end{equation}
equals $\log \Lambda_{\mathrm{GLRT}}(x_{1:T})$ up to tie-breaking.

\subsection{Channel Posterior and Token Evidence}
Introduce a latent channel variable $R_t\in[M]$ indicating which quantile bin generated token $x_t$.
Given teacher-forced reconstruction of $p_0(\cdot\mid h_t)$ and the overlap geometry,
\begin{equation}
\label{eq:channel-posterior}
\mathbb{P}(R_t=r \mid x_t,h_t)
=
\frac{\mu_{h_t}(x_t,r)}{p_0(x_t\mid h_t)}.
\end{equation}
We define per-token log-posterior odds (LPO) evidence for the channel event $R_t=r$:
\begin{equation}
\mathrm{LPO}_t(r)
\coloneqq
\log\frac{\mathbb{P}(R_t=r\mid x_t,h_t)}{1-\mathbb{P}(R_t=r\mid x_t,h_t)}.
\end{equation}

To avoid overloading watermark-level hypotheses, define the event-level hypotheses
$G_1(r): R_t=r$ and $G_0(r): R_t\neq r$.
Assume a uniform prior $\mathbb{P}(R_t=r)=1/M$.

\begin{lemma}[Token LPO as an event-level likelihood ratio up to a constant]
\label{lem:lpo-glrt}
Let the mixture under $G_0(r)$ be
\[
p(x_t\mid G_0(r))=\frac{1}{M-1}\sum_{j\neq r} p(x_t\mid R_t=j).
\]
Then
\begin{equation}
\mathrm{LPO}_t(r)
=
\log\frac{p(x_t\mid G_1(r))}{p(x_t\mid G_0(r))}
-\log(M-1).
\end{equation}
\end{lemma}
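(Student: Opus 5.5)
This is a direct Bayes computation, and the plan is to express both the posterior odds and the likelihood ratio in terms of the same per-bin likelihoods $p(x_t\mid R_t=j)$. By the embedding rule \eqref{eq:wm-dist}, $p(x_t\mid R_t=j)=M\,\mu_{h_t}(x_t,j)$. Under the uniform prior, the marginal is
\[
p(x_t)=\frac1M\sum_j M\mu_{h_t}(x_t,j)=p_0(x_t\mid h_t),
\]
where the last equality is Lemma~\ref{lem:overlap-identities}, Eq.~\eqref{eq:mu-sum-r}. This recovers the posterior formula \eqref{eq:channel-posterior}.

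Next I write the posterior odds using Bayes' rule with the uniform prior:
\[
\frac{\mathbb{P}(R_t=r\mid x_t,h_t)}{1-\mathbb{P}(R_t=r\mid x_t,h_t)}
=\frac{\tfrac1M\,p(x_t\mid R_t=r)}{\sum_{j\neq r}\tfrac1M\,p(x_t\mid R_t=j)}
=\frac{p(x_t\mid R_t=r)}{\sum_{j\neq r}p(x_t\mid R_t=j)}.
\]
The complement probability in the denominator is the total posterior mass on the other bins, because the bins partition $[M]$.

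I then substitute the definition of the $G_0(r)$ mixture, $\sum_{j\neq r}p(x_t\mid R_t=j)=(M-1)\,p(x_t\mid G_0(r))$. Since $p(x_t\mid G_1(r))=p(x_t\mid R_t=r)$, the odds equal
\[
\frac{p(x_t\mid G_1(r))}{(M-1)\,p(x_t\mid G_0(r))}.
\]
Taking logarithms gives the claimed identity.

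I do not expect a real obstacle here. The only points needing care are the degenerate cases. If $\mu_{h_t}(x_t,r)=0$, or if $x_t$ lies entirely inside $B_r$, the log-odds is $\mp\infty$, and both sides agree as extended reals. Alternatively, these cases are set aside since the paper clips $p_t(r)$ to $[\epsilon,1-\epsilon]$; the identity should be stated for the unclipped quantity. It is also worth noting that the $-\log(M-1)$ offset is exactly the log prior odds $\log\frac{1/M}{(M-1)/M}$, which is why it is independent of $r$ and $t$.
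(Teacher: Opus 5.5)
Your proof is correct and is essentially the paper's argument: write the posterior odds via Bayes' rule, cancel the uniform prior $1/M$, rewrite $\sum_{j\neq r}p(x_t\mid R_t=j)$ as $(M-1)\,p(x_t\mid G_0(r))$, and take logs. Your additional checks are sound; the paper does not include them. These are recovering Eq.~\eqref{eq:channel-posterior} from $p(x_t\mid R_t=j)=M\,\mu_{h_t}(x_t,j)$, and noting that the two degenerate cases give matching $\pm\infty$ values for the unclipped LPO.
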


\begin{proof}
By Bayes' rule,
\[
\begin{aligned}
&\frac{\mathbb{P}(R_t=r\mid x_t,h_t)}{\mathbb{P}(R_t\neq r\mid x_t,h_t)} \\
&\quad=
\frac{p(x_t\mid R_t=r)\,\mathbb{P}(R_t=r)}
{\sum_{j\neq r} p(x_t\mid R_t=j)\,\mathbb{P}(R_t=j)} \\
&\quad=
\frac{p(x_t\mid R_t=r)}
{\sum_{j\neq r} p(x_t\mid R_t=j)} \\
&\quad=
\frac{1}{M-1}\cdot
\frac{p(x_t\mid R_t=r)}{p(x_t\mid G_0(r))}.
\end{aligned}
\]
Taking logs yields the claim.
\renewcommand{\qedsymbol}{}
\end{proof}

\subsection{Comparison: LPO vs. LLR}
\label{sec:llr-lpo-comparison}

For a fixed decoded message, the standard Log-Likelihood Ratio (LLR) evidence is:
\begin{equation}
\mathrm{LLR}_t(r) \coloneqq \log \frac{p_{\mathrm{wm}}(x_t\mid h_t, r)}{p_0(x_t\mid h_t)} = \log M + \log p_t(r).
\label{eq:llr-def}
\end{equation}
While LLR and LPO are functionally related via $\mathrm{LPO}_t(r) = \mathrm{LLR}_t(r) - \log M - \log(1-p_t(r))$, LPO offers distinct numerical advantages.

\paragraph{Symmetry and Dynamic Range.}
We implement a clipped posterior $p_t(r) \in [\epsilon, 1-\epsilon]$ (with $\epsilon=10^{-6}$) for stability.
\begin{itemize}
    \item \textbf{LLR} is asymmetric and capped from above: $\mathrm{LLR}_t(r) \le \log M$. Even if the token is perfectly aligned ($p_t(r) \approx 1$), the positive evidence is limited by the bit-width $m$.
    \item \textbf{LPO} is symmetric and unbounded (before clipping): $\mathrm{LPO}_t \in [-C, C]$ where $C \approx 13.8$. This allows near-certain alignments to contribute significantly more evidence than $\log M$, improving separation when $p_t(r) \to 1$.
\end{itemize}

\paragraph{Empirical Validation.}
Table~\ref{tab:lpo-llr-compare} and Figure~\ref{fig:ablation_m_llr} compare decoding using LPO versus LLR.
While bit accuracy is similar (indicating decoding is driven by the rank order of posteriors), LPO yields higher detection AUC and better separation on C4.

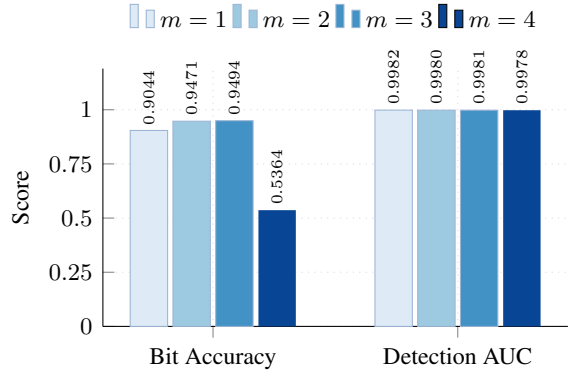
\begin{figure}[t]
\centering
\begin{tikzpicture}
    \definecolor{c1}{RGB}{222, 235, 247} 
    \definecolor{c2}{RGB}{158, 202, 225}
    \definecolor{c3}{RGB}{66, 146, 198}
    \definecolor{c4}{RGB}{8, 69, 148}
    \begin{axis}[
        ybar,
        width=\linewidth,
        height=5cm,
        symbolic x coords={Bit Accuracy, Detection AUC},
        xtick=data,
        bar width=14pt, 
        enlarge x limits=0.45, 
        ymin=0, 
        ymax=1.19, 
        ylabel={Score},
        ytick={0,  0.25, 0.5, 0.75, 1.0},
        axis x line*=bottom,
        axis y line*=left,
        grid=major,
        grid style={dotted, gray!30},
        legend style={
            at={(0.5, 1.1)}, 
            anchor=south,
            legend columns=-1,
            draw=none, 
            fill=none,
            font=\small
        },
        nodes near coords,
        every node near coord/.append style={
            font=\tiny, 
            rotate=90, 
            anchor=west,
            inner sep=2pt,
            /pgf/number format/fixed,
            /pgf/number format/fixed zerofill,
            /pgf/number format/precision=4
        },
        tick label style={font=\small},
        label style={font=\small},
    ]
        \addplot[style={fill=c1, draw=c4!40, thin}] coordinates {
            (Bit Accuracy, 0.90439) (Detection AUC, 0.99818)
        };
        \addlegendentry{$m=1$}

        \addplot[style={fill=c2, draw=c4!40, thin}] coordinates {
            (Bit Accuracy, 0.94706) (Detection AUC, 0.99804)
        };
        \addlegendentry{$m=2$}
        \addplot[style={fill=c3, draw=c4!40, thin}] coordinates {
            (Bit Accuracy, 0.9494) (Detection AUC, 0.99806)
        };
        \addlegendentry{$m=3$}
        \addplot[style={fill=c4, draw=none}] coordinates {
            (Bit Accuracy, 0.53641) (Detection AUC, 0.99777)
        };
        \addlegendentry{$m=4$}
    \end{axis}
\end{tikzpicture}
\caption{Performance comparison on LFQA with varying bits per symbol ($m$) using LLR as the token-level evidence. This serves as a complement to Figure~\ref{fig:ablation_m_final}, which uses LPO.}
\label{fig:ablation_m_llr}
\end{figure}
\begin{table}[t]
\centering
\scriptsize
\setlength{\tabcolsep}{3pt}
\renewcommand{\arraystretch}{1.12}
\begin{tabular}{lcccc}
\toprule
Evidence & \makecell{Bit Acc $\uparrow$} & \makecell{Detection AUC $\uparrow$} & \makecell{Score on wm\\($\mu \pm \sigma$)} & \makecell{Score on no wm\\($\mu \pm \sigma$)} \\
\midrule
\multicolumn{5}{l}{\textbf{C4}} \\
\cmidrule(lr){1-5}
LPO & 0.9868 & 0.9989 & 2.215 $\pm$ 1.134 & -2.791 $\pm$ 0.531 \\
LLR & 0.9879 & 0.9751 & -0.9069 $\pm$ 0.414 & -3.336 $\pm$ 0.659\\
\midrule
\multicolumn{5}{l}{\textbf{LFQA}} \\
\cmidrule(lr){1-5}
LPO & 0.9500 & 0.9997 & -0.046 $\pm$ 0.949 & -3.152 $\pm$ 0.410 \\
LLR & 0.9470 & 0.9980 & -0.899 $\pm$ 0.359 & -4.291 $\pm$ 0.554 \\
\bottomrule
\end{tabular}
\vspace{-1mm}
\caption{
Comparison of LPO and LLR evidence within QuantileMark ($m=2, H=12$) on C4 and LFQA.
Score entries report mean $\pm$ standard deviation.
}
\label{tab:lpo-llr-compare}
\vspace{-2mm}
\end{table}

\section{Relation to RNG Space and Reweighting Viewpoints}
\label{sec:app_rng}

Section~\ref{sec:related} surveys distribution preserving watermarking from the RNG space and reweighting viewpoints.
This appendix clarifies how QuantileMark fits into the broader landscape of distribution-preserving watermarks, specifically comparing the handling of randomness and reweighting mechanisms.

\subsection{RNG-Space Discretization and Unbiasedness}
\label{sec:app_rng_structure}

Standard distortion-free schemes typically define a measure-preserving map on the sampling randomness $u_t \in [0,1)$, ensuring $x_t \sim p_0$ for \emph{every} fixed message.
In contrast, QuantileMark discretizes the RNG space $[0,1)$ into $M$ disjoint bins and restricts sampling to a specific bin $B_{r^\star}$.
Conceptually, this partitions the entropy source into a watermark-controlled variable (the bin index) and residual stochasticity (uniform sampling within the bin).

While this restriction implies the distribution is distorted for a \emph{fixed} message, Lemma~\ref{lem:msg-unbiased} proves that the scheme recovers $p_0$ when marginalized over the message or key.
This design prioritizes \textbf{message symmetry} over strict per-message stealth. 
In provider-internal settings, this trade-off is advantageous: it maximizes the statistical evidence for detection while maintaining fairness and unbiasedness on average across the user base.
\subsection{Vocabulary Shuffling versus CDF Message Relabeling}
\label{sec:app_rng_shuffle_vs_relabel}

Some unbiased reweighting rules rely on shuffling a vocabulary order before applying accept amplify style operations ~\cite{hu2023unbiased,jiang2025stealthink}.
QuantileMark does not require vocabulary shuffling.
It keeps the CDF geometry fixed by the model induced probability ordering and uses only a low dimensional relabeling on $[M]$.

At each step, the PRF produces a message index assignment $i_t$ and a permutation $\phi_t$ on $[M]$.
The permutation relabels message symbols into bin indices, and the bins themselves are defined by equal mass partitioning of the stepwise CDF.
The randomized object is thus an element of $[M]$ or a permutation on $[M]$, rather than a randomized mask over $\mathcal{V}$.
This keeps the embedding rule simple and makes reconstruction straightforward in a white box setting, since the detector can rebuild the same quantile geometry from logits and evaluate overlap masses directly.

\section{Detailed Experiment Setup}
\label{sec:app_detailed_exp}

This appendix provides additional details on data preprocessing, filtering protocols, and the configuration of robustness attacks used in Section~\ref{sec:experiments}.

\subsection{Dataset Details}
\label{sec:app_dataset_details}

\paragraph{C4 (Open-ended continuation).}
We utilize the \texttt{realnewslike} subset of the C4 dataset~\cite{raffel2020exploring}.
For each example, we truncate the first 50 tokens to serve as the prompt and treat the subsequent text as the human reference.
To ensure valid comparisons, we first filter the dataset to retain only examples where the human reference exceeds the target evaluation length $T$.

\paragraph{LFQA (Long-form Question Answering).}
We use the ELI5/LFQA dataset~\cite{eli5_lfqa}, where the provided questions serve directly as prompts for the model.
Similar to C4, we filter for human answers that exceed length $T$.

\subsection{Attack Specifications}
\label{sec:app_attacks}

We evaluate robustness against four types of post-hoc editing attacks.
For the first three attacks, the parameter $\epsilon$ controls the intensity of the distortion.

\begin{itemize}
    \item \textbf{Copy-Paste Mixing:}
    This attack simulates a scenario where watermarked content is embedded within a larger non-watermarked context.
    We construct the attacked text by mixing a fraction $1-\epsilon$ of the watermarked generation with a fraction $\epsilon$ of non-watermarked text.
    The watermarked segment is inserted contiguously to simulate a copy-paste operation.

    \item \textbf{Synonym Substitution:}
    We replace a fraction $\epsilon$ of the tokens in the watermarked text with their synonyms using a predefined synonym dictionary, while preserving the original sentence structure.

    \item \textbf{Random Deletion:}
    We randomly select and remove a fraction $\epsilon$ of tokens from the watermarked sequence.

    \item \textbf{Paraphrasing:}
    We employ DIPPER~\cite{krishna2023paraphrasing}, a paraphrase generation model designed for controlling lexical and syntactic diversity.
    Following standard evaluation protocols, we configure DIPPER with lexical diversity $L=20$ and order diversity $O=0$.
\end{itemize}

\begin{table}[t]
\centering
\small
\setlength{\tabcolsep}{2.5pt}
\renewcommand{\arraystretch}{1.1}
\begin{tabular}{lcccc}
\toprule
& \multicolumn{2}{c}{C4} & \multicolumn{2}{c}{LFQA} \\
\cmidrule(lr){2-3} \cmidrule(lr){4-5}
Method &
AUC $\uparrow$ &
\makecell{TPR@\\1\%FPR $\uparrow$} &
AUC $\uparrow$ &
\makecell{TPR@\\1\%FPR $\uparrow$} \\
\midrule
MPAC       & 0.9993  & 0.986 & 0.9744 & 0.8337 \\
StealthInk & 0.9851 & 0.6960 & 0.8081 & 0.1824 \\
\makecell{\textbf{QuantileMark}}
           & \textbf{0.9998} & \textbf{0.9900} & \textbf{0.9985} & \textbf{0.9499} \\
\bottomrule
\end{tabular}
\caption{Detection performance on C4 and LFQA with 24 bits embedded in 300 tokens. Negatives are non-watermarked text generated by model.}
\label{tab:detection-app}
\end{table}
\section{Additional Results}
\label{sec:app_additional}
\subsection{Model Generation as Non-Watermarked Text}
\label{sec:app_model_negatives}

In the main text, we use human references as detection negatives (non-watermarked text) to reflect a deployment setting where provenance is verified among organic text.
Since our detector is model-assisted and reconstructs token-level geometry via teacher forcing, the choice of negatives can interact with distribution mismatch between human text and the model.
To isolate the watermark signal from such mismatch, we additionally evaluate detection where negatives are unwatermarked generations from the same base model, produced with the same prompts and decoding parameters as the watermarked texts.
We report AUC and TPR at 1\% FPR under this matched-model control, as Table~\ref{tab:detection-app} shows.

\subsection{GPT-4o Judging for LFQA}
\label{sec:app_gpt4o_judge}

We employ GPT-4o~\cite{openai2024gpt4ocard} as a reference-free judge for LFQA. The evaluator assesses the generated text directly and assigns a single integer score on a scale of 1 to 5. Following the protocol established by MirrorMark \citep{mirrormark2025}, we design the prompt to focus exclusively on linguistic quality while explicitly ignoring artifacts resulting from truncation. Table~\ref{tab:lfqa_gpt4o} shows the result.

\begin{table}[t]
\small
\centering
\setlength{\tabcolsep}{6pt}
\renewcommand{\arraystretch}{1.08}
\begin{tabular}{lc}
\toprule
Method & GPT-4o $\uparrow$ \\
\midrule
No watermark & 4.115 \\
\midrule
MPAC &  3.921\\
StealthInk & 4.047 \\
QuantileMark & \textbf{4.109} \\
\bottomrule
\end{tabular}
\vspace{-1mm}
\caption{
LFQA answer quality judged by GPT-4o.
Scores are on a 1--5 scale; higher is better.
We randomly sample 500 prompts from LFQA and generate answers with a maximum length of $T{=}300$ new tokens.
During evaluation, we exclude outputs shorter than 50 tokens to mitigate potential length-related bias.
}
\label{tab:lfqa_gpt4o}
\vspace{-2mm}
\end{table}

\begin{judgebox}
\small
You are a strict and consistent text-quality evaluator.
Use ONLY the given text; do NOT assume the author's intent.
The text may start or end abruptly because the generation length is fixed. Do NOT penalize truncation or incompleteness.
Do NOT judge factual correctness.

Rate each field as an integer from 1 to 5.
\textbf{overall} is an independent judgment; do NOT compute overall from the other fields (no arithmetic).

Rate the following text.
Return only a JSON object in exactly the following structure:

\begin{verbatim}
{
  "coherence": int,
  "clarity": int,
  "naturalness": int,
  "overall": int
}
\end{verbatim}

Text:
[TEXT]
\end{judgebox}

\subsection{Scaling to Longer Messages}
\label{sec:app_longer_msg}

We evaluate QuantileMark with longer messages embedded in a fixed sequence of $T{=}300$ tokens on LFQA, using $m{=}2$ bits per symbol.

\begin{table}[h]
\centering
\small
\setlength{\tabcolsep}{6pt}
\renewcommand{\arraystretch}{1.08}
\begin{tabular}{ccc}
\toprule
Message length (bits) & Bit Acc $\uparrow$ & AUC $\uparrow$ \\
\midrule
24 & 0.9500 & 0.9997 \\
32 & 0.9196 & 0.9997 \\
40 & 0.9025 & 0.9997 \\
48 & 0.8801 & 0.9994 \\
56 & 0.8690 & 0.9995 \\
\bottomrule
\end{tabular}
\caption{Effect of increasing message length on LFQA ($T{=}300$, $m{=}2$). Detection AUC remains near-perfect, while bit accuracy degrades gracefully as each symbol receives fewer supporting tokens.}
\label{tab:longer-msg}
\end{table}

These results demonstrate that QuantileMark scales well to longer messages, maintaining robust detection performance while degrading gracefully in bit recovery accuracy. As the message grows, each symbol is allocated fewer supporting tokens, reducing the per-symbol evidence budget.

\subsection{Mitigating Repeated Seeds}
\label{sec:app_repeat_seed}

As noted in Section~\ref{sec:setup-pa}, the per-step parameters $(i_t, \phi_t)$ are derived from a hash of the local context window $g_t$.
When consecutive steps produce identical context hashes (seed collisions), the same bin assignment is reused, which can introduce sentence-level distributional bias even though per-step message-unbiasedness holds.

A simple mitigation is to \textbf{skip embedding} at any step whose seed has already appeared in the current sequence, treating the skipped step as an erasure.
Table~\ref{tab:repeat-seed} evaluates this strategy on LFQA with 24 bits embedded in 300 tokens.

\begin{table}[h]
\centering
\small
\setlength{\tabcolsep}{5pt}
\renewcommand{\arraystretch}{1.08}
\begin{tabular}{lccc}
\toprule
Method & Bit Acc $\uparrow$ & AUC $\uparrow$ & PPL $\downarrow$ \\
\midrule
QuantileMark             & 0.9500 & 0.9997 & 2.759 \\
\quad + skip repeated seed & 0.9438 & 0.9996 & 2.644 \\
\bottomrule
\end{tabular}
\caption{Effect of skipping repeated seeds on LFQA. Skipping slightly reduces PPL and mitigates sentence-level bias, while maintaining detection performance.}
\label{tab:repeat-seed}
\end{table}

As expected, skipping reduces perplexity (confirming that sentence-level bias is mitigated) while detection performance remains essentially unchanged, thanks to the robust evidence aggregation of the detector.

\end{document}